\documentclass{article}

\usepackage{microtype}
\usepackage{graphicx}
\usepackage{subcaption}
\usepackage{booktabs} % for professional tables

\usepackage{hyperref}

\usepackage[ruled, vlined,algo2e,linesnumbered]{algorithm2e}

\usepackage[preprint]{icml2026}

\usepackage{amsmath}
\usepackage{amssymb}
\usepackage{mathtools}
\usepackage{amsthm}

\usepackage[capitalize,noabbrev]{cleveref}

\theoremstyle{plain}
\newtheorem{theorem}{Theorem}[section]
\newtheorem{proposition}[theorem]{Proposition}
\newtheorem{lemma}[theorem]{Lemma}

\theoremstyle{definition}
\newtheorem{definition}[theorem]{Definition}
\newtheorem{assumption}[theorem]{Assumption}
\theoremstyle{remark}
\newtheorem{remark}[theorem]{Remark}

\usepackage[textsize=tiny]{todonotes}

\icmltitlerunning{Revealing Combinatorial Reasoning of GNNs via Graph Concept Bottleneck Layer}

\begin{document}

\twocolumn[
  \icmltitle{Revealing Combinatorial Reasoning of GNNs via Graph Concept Bottleneck Layer}

  % It is OKAY to include author information, even for blind submissions: the
  % style file will automatically remove it for you unless you've provided
  % the [accepted] option to the icml2026 package.

  % List of affiliations: The first argument should be a (short) identifier you
  % will use later to specify author affiliations Academic affiliations
  % should list Department, University, City, Region, Country Industry
  % affiliations should list Company, City, Region, Country

  % You can specify symbols, otherwise they are numbered in order. Ideally, you
  % should not use this facility. Affiliations will be numbered in order of
  % appearance and this is the preferred way.

  \begin{icmlauthorlist}
    \icmlauthor{Yue Niu}{yyy}
    \icmlauthor{Zhaokai Sun}{yyy}
    \icmlauthor{Jiayi Yang}{yyy}
    \icmlauthor{Xiaofeng Cao}{yyy}
    \icmlauthor{Rui Fan}{yyy}
    \icmlauthor{Xin Sun}{sch}
    \icmlauthor{Hanli Wang}{yyy}
    \icmlauthor{Wei Ye}{yyy}
    %\icmlauthor{}{sch}
    %\icmlauthor{}{sch}
  \end{icmlauthorlist}

  \icmlaffiliation{yyy}{Tongji University, Shanghai, China}
  \icmlaffiliation{sch}{City University of Macau, Macau, China}
  % \icmlaffiliation{sch}{School of ZZZ, Institute of WWW, Location, Country}

  \icmlcorrespondingauthor{Wei Ye}{yew@tongji.edu.cn}
  % \icmlcorrespondingauthor{Firstname2 Lastname2}{first2.last2@www.uk}

  % You may provide any keywords that you find helpful for describing your
  % paper; these are used to populate the "keywords" metadata in the PDF but
  % will not be shown in the document
  \icmlkeywords{Machine Learning, ICML}

  \vskip 0.3in
]

% this must go after the closing bracket ] following \twocolumn[ ...

% This command actually creates the footnote in the first column listing the
% affiliations and the copyright notice. The command takes one argument, which
% is text to display at the start of the footnote. The \icmlEqualContribution
% command is standard text for equal contribution. Remove it (just {}) if you
% do not need this facility.

% Use ONE of the following lines. DO NOT remove the command.
% If you have no special notice, KEEP empty braces:
\printAffiliationsAndNotice{}  % no special notice (required even if empty)
% Or, if applicable, use the standard equal contribution text:
% \printAffiliationsAndNotice{\icmlEqualContribution}

\begin{abstract}
% Graph Neural Networks mark a fundamental shift in machine learning by enabling the modeling of complex relational structures beyond Euclidean data, achieving state-of-the-art performance in various domains. 
Despite their success in various domains, the growing dependence on GNNs raises a critical concern about the nature of the combinatorial reasoning underlying their predictions, which is often hidden within their black-box architectures. Addressing this challenge requires understanding how GNNs translate topological patterns into logical rules. However, current works only uncover the hard logical rules over graph concepts, which cannot quantify the contribution of each concept to prediction. Moreover, they are post-hoc interpretable methods that generate explanations after model training and may not accurately reflect the true combinatorial reasoning of GNNs, since they approximate it with a surrogate. In this work, we develop a graph concept bottleneck layer that can be integrated into any GNN architectures to guide them to predict the selected discriminative global graph concepts. The predicted concept scores are further projected to class labels by a sparse linear layer. It enforces the combinatorial reasoning of GNNs’ predictions to fit the soft logical rule over graph concepts and thus can quantify the contribution of each concept. To further improve the quality of the concept bottleneck, we treat concepts as ``graph words'' and graphs as ``graph sentences'', and leverage language models to learn graph concept embeddings. Extensive experiments on multiple datasets show that our method GCBMs achieve state-of-the-art performance both in classification and interpretability.

\end{abstract}

\section{Introduction}

Graph neural networks (GNNs)~\cite{hamilton2017inductive,kipf2017semi,velivckovic2017graph,zhang2018end,xu2018powerful} are prevalent in the learning tasks on graph data. However, they are inherently black-box models. The ``combinatorial reasoning'' they employ to arrive at predictions is opaque to humans. This limits their application in high-stakes fields such as drug discovery. A prediction without a logical ``why'' is often unplausible and unusable. Humans cannot verify if the GNN used valid chemical properties or just exploited spurious correlations in the dataset for prediction.

Combinatorial reasoning in GNNs describes the internal decision process through which a model discovers, evaluates, and composes discrete structural patterns—such as subgraphs or motifs—into a unified global decision rule that generalizes across a collection of graphs. Since GNNs explicitly contend with the discrete and combinatorial structure inherent to graph data, combinatorial reasoning goes beyond simple statistical correlation; it reflects a learned structural logic that can, at least in principle, be articulated as a Boolean formula over graph concepts (subgraphs or motifs).

Most interpretable methods~\cite{ying2019gnnexplainer,yuan2021explainability,xie2022task,yugraph,yu2022improving,zhang2022protgnn,seo2023interpretable,rao2025incorporating} are unable to reveal the ``combinatorial reasoning'' in GNNs. Recently, GraphTrail~\cite{armgaan2024graphtrail} and GLGExplainer~\cite{azzolinglobal} have been proposed to express the combinatorial reasoning of a trained GNN $\Phi$ as a set of $C$ (the number of graph classes) Boolean formulas $\{f_1,\ldots,f_C\}$ of graph concepts such that for any graph $G$, if $\Phi(G)=\mathbf{y}_i$ (the one-hot label of the $i$-th class), then $f_i(\mathcal{C})=True$ and $\forall j\neq i$, $f_j(\mathcal{C})=False$, where $\mathcal{C}$ contains all unique graph concepts in the dataset. It is obvious that the Boolean formulas are human-interpretable, but they are hard logical rules (i.e., they cannot quantify the importance of each concept that contributes to the prediction) and just proxies (post-hoc interpretation) for a GNN's prediction.

To tackle these limitations, we aim to develop a globally and intrinsically interpretable method that can identify the soft logical rules on the graph concepts, whose importance to the predictions is directly revealed. Our method is called Graph Concept Bottleneck Models (GCBMs), at the heart of which is a graph concept bottleneck layer that can be integrated into any GNNs and trained together. GCBMs first predict graph concepts, then map these predicted concepts to class labels, and identify the combinatorial reasoning (a soft logical rule) between concepts and class labels as a global interpretation.
Thus, they can quantify the contribution of each concept to predictions. Besides, it naturally provides an interface (graph concept bottleneck layer) that enables easy manual intervention to improve model accuracy while retaining interpretability. 

Building GCBMs entails three key challenges: graph concept extraction, discriminative concept selection, and high-quality graph concept label generation.
Specifically, we first adopt WL-subtrees~\cite{shervashidze2011weisfeiler} as graph concepts, as the message-passing mechanism of most GNNs follows the aggregate and update rule of the first-order Weisfeiler-Lehman (WL) graph isomorphism test~\cite{leman1968reduction}.
Second, we use information gain~\cite{kent1983information} to select discriminative concepts.
Finally, we obtain the concept scores by computing the similarity between the concept frequency vectors of the graphs and the one-hot encodings of the concepts. To improve the accuracy of the concept scores, we treat concepts as ``graph words'' and graphs as ``graph sentences'', leverage language models (LMs) to learn concept embeddings to capture their correlations and co-occurrences, and compute accurate concept scores based on these embeddings to optimize the quality of the concept bottleneck.

Our contributions can be summarized as follows:
\begin{itemize}
\item We propose Graph Concept Bottleneck Models (GCBMs), a new globally and intrinsically interpretable paradigm for GNNs that reveals the soft combinatorial reasoning of GNNs' predictions.
\item GCBMs are generic interpretation frameworks for GNNs. They can use any GNN as the backbone, any graph substructure as the graph concept, and any language model as the concept embedding method.
\item We verify GCBMs’ classification performance, interpretability, and intervenability on benchmark graph datasets across multiple domains, showing the advantages of GCBMs over baselines.
\end{itemize}

\section{Related Work}

\paragraph{Interpretable GNNs}
Current research on GNN interpretability is mainly categorized into post-hoc explanation and intrinsic interpretability. Post-hoc methods such as GNNExplainer~\cite{ying2019gnnexplainer} use independent models to trace the rationale behind black-box decisions. However, they suffer from unstable explanations and a tendency to capture spurious features that weaken model reliability~\cite{luo2020parameterized,vu2020pgm}.
Intrinsic methods embed explanatory logic into model design, such as GIB~\cite{yu2020graph} and ProtGNN~\cite{zhang2022protgnn}, which extract informative prototypes or subgraphs that support predictions but provide instance-level explanations and lack the ability to uncover combinatorial reasoning. GraphTrail~\cite{armgaan2024graphtrail} and GLGExplainer~\cite{azzolinglobal} use a surrogate to approximate the logical rules over graph concepts in GNNs, which may not uncover the truly inherent combinatorial reasoning. In addition, they are post-hoc methods and the logical rules uncovered by them are hard and cannot reveal the importance of each graph concept to prediction.

\paragraph{Concept Bottleneck Models} 
Concept bottleneck was first proposed in~\cite{koh2020concept} and used in image domain and now has been extended to text domain. In the image domain, methods such as Label-free CBM~\cite{oikarinen2023label}, LaBo~\cite{yang2023language}, and VLG-CBM~\cite{srivastava2024vlg} optimize the generation of concept sets and enhance their adaptability to different visual classification tasks.
Graph-CBMs~\cite{xu2025graph} treat image concepts as nodes to construct latent concept graphs, thus capturing the correlations between concepts. Graph-CBMs differ from the GCBMs proposed in this paper. GCBMs are designed for graph data.
In the text domain, frameworks such as TBMs~\cite{ludan2023interpretable,tan2024interpreting} and CB-LLM~\cite{sun2024concept} integrate CBMs with LLMs to meet the interpretability requirements of text-related tasks.

\section{Preliminaries}

An undirected and labeled graph is denoted by $G = (\mathcal{V}, \mathcal{E}, \mathbf{X})$, where $\mathcal{V}$ is a finite node set with features $\mathbf{X} = [\mathbf{x}_1, \ldots, \mathbf{x}_{|\mathcal{V}|}]^T\in\mathbb{R}^{|\mathcal{V}|\times d}$ ($\mathbf{x}_i \in \mathbb{R}^d$ is the feature vector of node $v_i$), and $\mathcal{E} \subseteq \mathcal{V} \times \mathcal{V} = \{(v_i, v_j) \mid v_i, v_j \in \mathcal{V}\}$ is the edge set, where an edge $e = (v_i, v_j)$ exists iff $e \in \mathcal{E}$. For the graph classification task, let $\mathcal{D} = \{G_i, \mathbf{y}_i\}_{i=1}^n$ denote the graph dataset, where $\mathbf{y}_i$ is the one-hot class label of graph $G_i$, and $n$ is the total number of graphs.

\begin{definition}[Graph Concept Bottleneck Layer]
Given a dataset $\mathcal{D} = \{G_i, \mathbf{y}_i\}_{i=1}^n$ consisting of $n$ input-target pairs, a graph concept bottleneck layer decomposes the classification process of GNNs into two stages: a backbone GNN $\Phi: G \rightarrow \mathbb{R}^M$, which projects $G$ into an $M$-dimensional concept space to generate the predicted concept label vector $\hat{\mathbf{c}} = \Phi(G) \in \mathbb{R}^M$; a linear classifier $f: \mathbb{R}^M \rightarrow \mathbb{R}^C$, which maps $\hat{\mathbf{c}}$ to the final prediction $\hat{\mathbf{y}} = f(\hat{\mathbf{c}}) \in \mathbb{R}^C$. 
\end{definition}

\begin{figure*}[!htb]
  % \vskip 0.2in
  \begin{center}
    \centerline{\includegraphics[width=2.0\columnwidth]{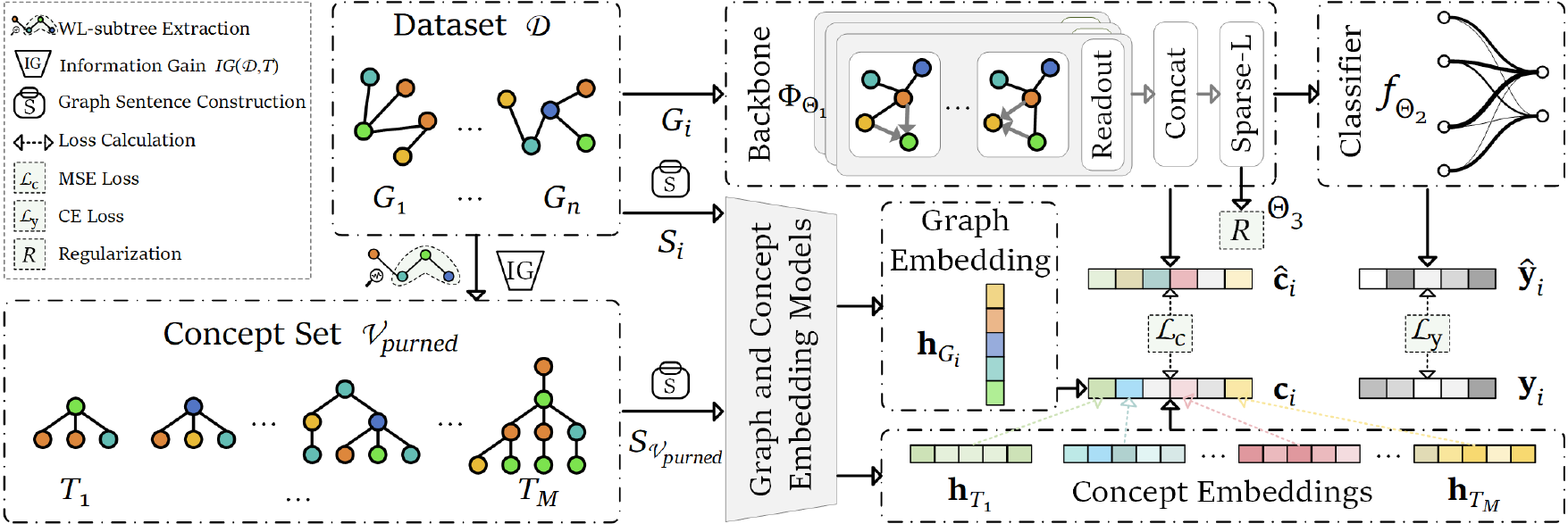}}
    \caption{
      Overview of the GCBMs construction pipeline.
    }
    \label{fig:gcbm pipeline}
  \end{center}
  \vskip -0.2in
\end{figure*}

Notably, the prediction $\hat{\mathbf{y}}$ is entirely dependent on the input $G$ through the bottleneck $\hat{\mathbf{c}}$, where each dimension of $\hat{\mathbf{c}}$ corresponds to a predefined concept. To build the graph concept bottleneck layer, a predefined concept set $\mathcal{T} = \{\mathbf{t}_m\}_{m=1}^M$ is necessary. Each concept $\mathbf{t}_m$ describes a topologically meaningful subgraph of the samples in $\mathcal{D}$.
For each input $G_i$, the ground-truth concept label $\mathbf{c}_i \in \mathbb{R}^M$ is computed by quantifying the relevance of $G_i$ to all concepts.
The dataset is then augmented with these concept labels as $\mathcal{D} = \{(G_i, \mathbf{c}_i, \mathbf{y}_i)\}_{i=1}^n$  for training both $\Phi$ and $f$.

We can directly observe the linear relationship between concepts and class labels through this layer, which is a soft logical rule that quantifies the contribution of each concept to the final prediction.

\section{Method: GCBMs}

This section presents the construction of the graph concept bottleneck layer, and the overall architecture of the GCBMs is illustrated in Figure~\ref{fig:gcbm pipeline}.

\subsection{Constructing Graph Concept Bottleneck Layer}

The message-passing mechanism in most GNNs is a continuous relaxation of the first-order WL graph isomorphism test~\cite{leman1968reduction}. Thus, we use WL-subtrees as graph concepts.
Figure~\ref{fig:wlsubtree} shows an undirected and labeled graph and a WL-subtree of height 2 rooted at the node with label 1. The WL graph isomorphism test iteratively compares the neighborhood of each node between two graphs, and as a byproduct, the encoding of the WL-subtree is generated. This WL-subtree encoding can be represented as ``1,(2,13)(3,123)'', which is an integration of the encodings of three WL-subtrees (``1,23'', ``2,13'', ``3,123'') of height 1 rooted at the nodes with labels 1, 2, 3, respectively.

\begin{figure}[!htb]
  % \vskip 0.2in
  \begin{center}
    \centerline{\includegraphics[width=0.6\columnwidth]{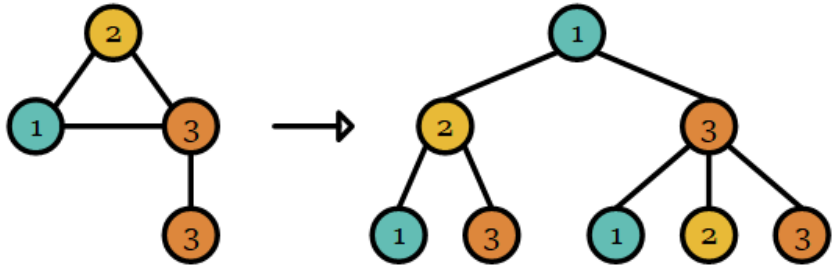}}
    \caption{
      On the left side is a graph, and on the right side is its WL-subtree of height 2 rooted at the node with label 1.
    }
    \label{fig:wlsubtree}
  \end{center}
  \vskip -0.2in
\end{figure}

For a graph dataset \( \mathcal{D} = \{ (G_i, \mathbf{y}_i) \}_{i=1}^n \), we first collect WL-subtrees of height \( k \) for all nodes in each graph and keep them in the graph concept corpus $\mathcal{C}^{(k)}$. 
Thus, $\mathcal{C}^{(k)}$ is a global-level corpus that spans all graphs in $\mathcal{D}$.
Subsequently, we extract non-isomorphic WL-subtrees from $\mathcal{C}^{(k)}$ to form the graph concept vocabulary $\mathcal{V}^{(k)}=\{T_1^{(k)},\ldots,T_{|\mathcal{V}^{(k)}|}^{(k)}\}$, where all the non-isomorphic graph concepts are sorted lexicographically by their encodings. 

Since not all graph concepts in $\mathcal{V}^{(k)}$ are relevant for predicting graph class labels, we need to identify an effective graph concept set $\mathcal{V}_{pruned}^{(k)}$ from $\mathcal{V}^{(k)}$ to construct the graph concept bottleneck layer for GNNs. Specifically, $\mathcal{V}_{pruned}^{(k)}$ is selected by measuring the mutual information of each concept to the graph class label. Thus, we use information gain $IG(\mathcal{D}, T^{(k)})$ to quantify the relevance of graph concept $T^{(k)}$ to class label. For each graph $G_i$, we compute its representation vector $\mathbf{x}_{G_i}^{(k)}$, where the $j$-th entry corresponds to the occurrence frequency of concept $T_j^{(k)}$ in $G_i$.
For the dataset $\mathcal{D}$, we then construct a matrix $\mathbf{X}^{(k)}$ with the $i$-th row $\mathbf{x}_{i,:}^{(k)}$ denoting the representation vector of the $i$-th graph and the $j$-th column $\mathbf{x}_{:,j}^{(k)}$ denoting the occurrence frequency of the $j$-th graph concept across all graphs.
The information gain for the $j$-th graph concept $T_j^{(k)}$ can be computed as:
\begin{equation}
\label{ig}
IG(\mathcal{D}, T_j^{(k)}) = IG(\mathcal{D}, \mathbf{x}_{:,j}^{(k)})=H(\mathcal{D}) - H(\mathcal{D}|\mathbf{x}_{:,j}^{(k)}),
\end{equation}
where $H(\mathcal{D})$ is the entropy of the class distribution in $\mathcal{D}$, and $H(\mathcal{D}|\mathbf{x}_{:,j}^{(k)})$ is the conditional entropy of the class distribution given graph concept $T_j^{(k)}$.

We select the top $M$ graph concepts with the highest information gain to form the graph concept set $\mathcal{V}_{pruned}^{(k)}$. 
Each graph concept $T_j^{(k)}\in \mathcal{V}_{pruned}^{(k)}$ ($1\leq j\leq M$) can be represented as a one-hot vector $\mathbf{t}_j^{(k)}$ of length $|\mathcal{V}^{(k)}|$ with the entry on its position in $\mathcal{V}^{(k)}$ as 1 and others 0.
Similar to CBMs, we define the ground-truth concept label/score of graph $G_i$ as the vector:
\begin{equation}
\label{eqn:concept_label}
\mathbf{c}_{i}^{(k)} = [E(\mathbf{x}_{G_i}^{(k)}, \mathbf{t}_1^{(k)}),\ldots,E(\mathbf{x}_{G_i}^{(k)}, \mathbf{t}_M^{(k)})],
\end{equation}
where $\mathbf{c}_{i}^{(k)}\in \mathbb{R}^M$ and its $j$-th score $E(\mathbf{x}_{G_i}^{(k)}, \mathbf{t}_j^{(k)})$ is computed as the cosine similarity between $\mathbf{x}_{G_i}^{(k)}$ and $\mathbf{t}_j^{(k)}$:
\begin{equation}\label{eqn:concept_label_}
E(\mathbf{x}_{G_i}^{(k)}, \mathbf{t}_j^{(k)}) = \frac{\langle \mathbf{x}_{G_i}^{(k)}, \mathbf{t}_j^{(k)} \rangle}{\|\mathbf{x}_{G_i}^{(k)}\| \cdot \|\mathbf{t}_j^{(k)}\|},
\end{equation}
where $\langle \cdot, \cdot \rangle$ denotes the dot product, and $\| \cdot \|$ denotes the norm. 

We concatenate all $\mathbf{c}_{i}^{(k)}$ ($1\leq k\leq K$) to obtain the global ground-truth graph concept label $\mathbf{c}_{i}=\text{Concat}(\mathbf{c}_{i}^{(1)}, \ldots, \mathbf{c}_{i}^{(K)})$. We adopt the $K$-layer graph isomorphism network (GIN)~\cite{xu2018powerful} as the GNN backbone $\Phi$, as it has been theoretically proven that its discriminative power is equivalent to the first-order WL graph isomorphism test. We obtain the global graph representation $\mathbf{h}_{G_i}$ via a concatenation of each output of the GIN layer. Then, we align the hierarchical outputs with the global ground-truth graph concept labels. This alignment ensures each layer’s output retains graph structural properties and can be interpreted as the extent to which the selected concepts exist in the graph.

To simultaneously optimize the concept learning and classification accuracy, GCBM's objective function comprises three components:
\begin{equation}
\mathcal{L} = \lambda_c \mathcal{L}_c + \mathcal{L}_y + \lambda_r R,
\end{equation}
where $\lambda_c, \lambda_r > 0$ denote trade-off parameters that balance concept learning, classification performance, and concept sparsity. The concept loss $\mathcal{L}_c = \frac{1}{|\mathcal{D}|} \sum_{G_i \in \mathcal{D}} \| \Phi_{\Theta_1}(G_i) - \mathbf{c}_{i} \|^2$ facilitates alignment between graph representations and their concept labels. The cross-entropy classification loss $\mathcal{L}_y = \frac{1}{|\mathcal{D}|} \sum_{G_i \in \mathcal{D}} CE\left(f_{\Theta_2}(\Phi_{\Theta_1}(G_i)), \mathbf{y}_{i}\right)$ ensures consistency between predicted and true class labels. The sparsity regularization term $R = \mathcal{L}_1(\Theta_3)$ enhances the sparsity of concept vectors and suppresses overfitting, where $\mathcal{L}_1(\cdot)$ denotes the $L_1$ regularization applied to the weight matrix $\Theta_3$ of the sparse linear layer ($\text{Sparse-L}$).

\subsection{Graph Concept Embedding}

In the above section, the ground-truth graph concept labels/scores computed based on the one-hot embeddings of graphs and concepts fail to capture correlations and co-occurrences between concepts and their probability distributions in and across graphs, leading to inaccurate ground-truth concept labels/scores.
In image and text domains, concepts within the concept set often exhibit inherent correlations. For instance, ``bone spurs'' and ``intervertebral space'' in images, or ``history'' and ``archaeology'' in text.
Similarly, graph concepts may share analogous semantic correlations and occurrence patterns. A typical example is the structural similarity between pyridine rings and pyrimidine rings in molecules, which cannot be reflected by their orthogonal one-hot representations.

To mitigate this issue, we analogize each graph concept as a “graph word” and draw inspiration from LMs. These models capture correlations and co-occurrences between tokens (``graph words'') through full-sequence context modeling. The analogy is supported by prior observations~\cite{perozzi2014deepwalk,yanardag2015deep} that graph substructure distributions follow a power law that is consistent with word distributions in natural languages.
As shown in Figure~\ref{fig:power_law}, WL-subtree distributions in graphs approximate the power law distribution.
Thus, LMs such as Transformer~\cite{vaswani2017attention} can effectively learn WL-subtree (graph concept) embeddings while capturing their correlations and co-occurrences.

\begin{figure}[!htb]
  \centering
  \includegraphics[width=0.37\textwidth]{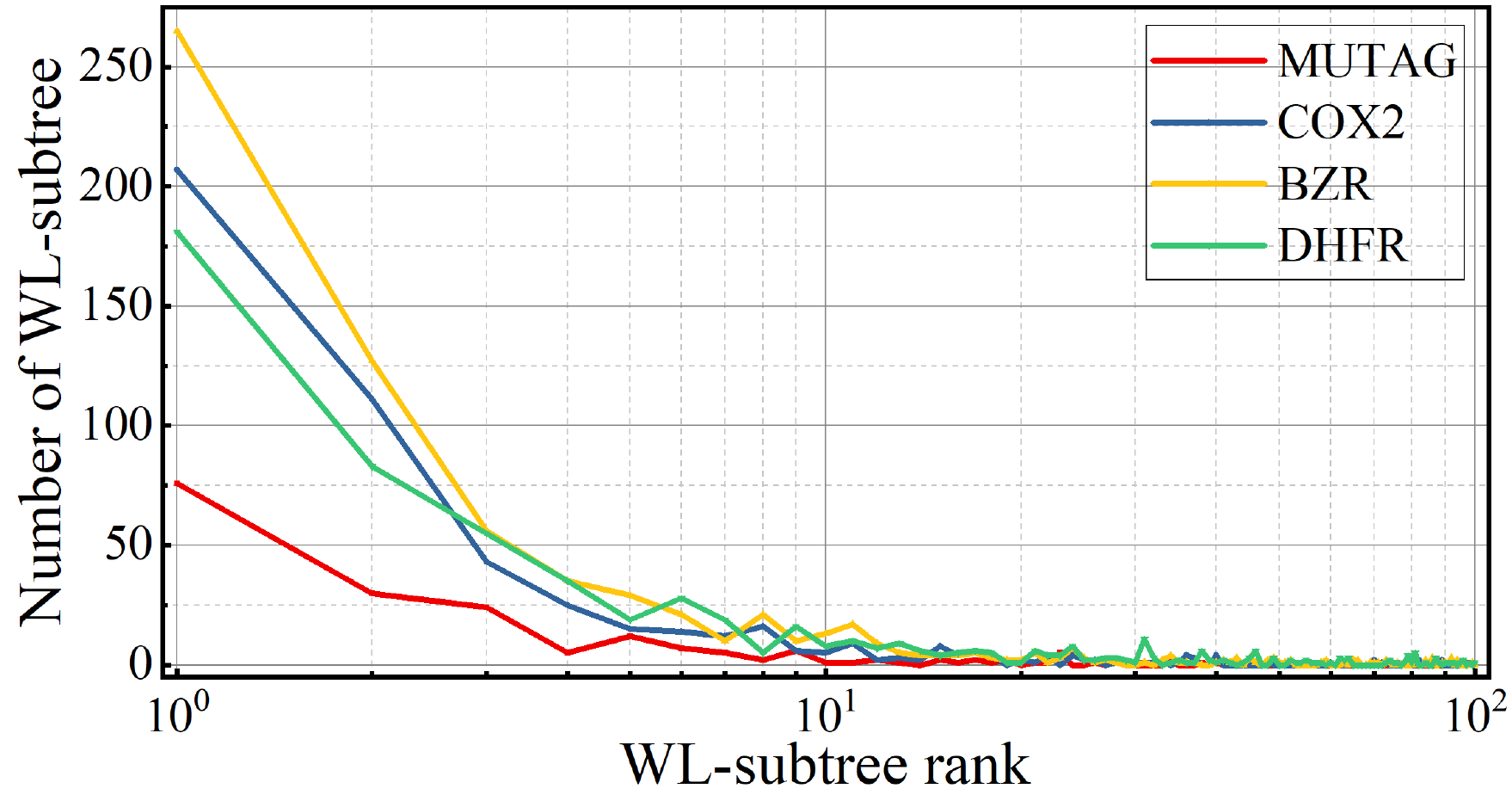}
  \caption{Distributions of the WL-subtrees ($\text{height} \leq 2$) in graph datasets MUTAG, COX2, BZR, and DHFR. The $x$-axis denotes WL-subtree rank (i.e., the number of occurrences of a WL-subtree), and the $y$-axis denotes the number of WL-subtrees with the same rank. These distributions approximate a power law.}
  \label{fig:power_law}
\end{figure}

Specifically, for graph $G_i$, we build its ``graph sentence'' $S_{i}^{\left(k\right)}=(T_{v_{\pi(1)}}^{(k)},\dots,T_{v_{\pi(|\mathcal{V}_{i}|)}}^{(k)})$, where $T_{v_{\pi(u)}}^{(k)} (1\leq u\leq |\mathcal{V}_{i}|)$ denotes the WL-subtree of height $k$ rooted at node $v_{\pi(u)}$ and $\pi(\cdot)$ is a permutation function on node indices in $G_i$, because the ``graph words'' in this ``graph sentence'' are sorted lexicographically by the encodings of their WL-subtrees. 
For concept $T_j^{(k)} (1\leq j\leq M)$, we also construct a ``graph sentence'' $S_{\mathcal{V}_{pruned}^{\left(k\right)}}=(T_1^{(k)},\dots,T_M^{(k)})$ based on $\mathcal{V}_{pruned}^{(k)}$, and the ``graph words'' are sorted in the same way as above.
Then, for graph dataset \( \mathcal{D} = \{ (G_i, \mathbf{y}_{i}) \}_{i=1}^n \), we add all ``graph sentences'' at levels $k=1,\ldots, K$ into a corpus $\mathcal{C}_{gs}$, and use the Transformer to learn dense embeddings of graph concepts:
\begin{align}
\mathbf{h}_{T_{v_{\pi(u)}}^{(k)}} &= \text{LM}(T_{v_{\pi(u)}}^{(k)}, \mathcal{C}_{gs}),\label{eq:ht}\\
\mathbf{h}_{T_j^{(k)}} &= \text{LM}(T_j^{(k)}, \mathcal{C}_{gs}),\label{eq:tj}
\end{align}
where $\mathbf{h}_{T_{v_{\pi(u)}}^{(k)}} \in \mathbb{R}^d$ denotes the embedding of the WL-subtree $T_{v_{\pi(u)}}^{(k)}$, which is also treated as the embedding of node $v_{\pi(u)}$. Similarly, $\mathbf{h}_{T_j^{(k)}} \in \mathbb{R}^d$ denotes the embedding of concept $T_j^{(k)}$. Notably, the embedding dimension $d$ is much smaller than the dimension of one-hot vectors ($|\mathcal{V}^{(k)}|$).

\begin{table*}[!htb]
  \caption{Classification accuracy (mean\%$\pm$standard deviation\%) of GCBMs and baseline models of 10-fold cross-validation on benchmark datasets. The best results are in \textbf{bold} and the second-best results are as \underline{underlined} (This notation applies to all subsequent tables).}
  \label{tab:accuracy}
  % \begin{center}
    % \begin{small}
    %   \begin{sc}
    \centering
        \resizebox{\linewidth}{!}{
        \begin{tabular}{llcccccccccc}
          \toprule
          &Dataset       &MUTAG &PTC-MR &BZR &COX2 &DHFR &PROTEINS &IMDB-B &IMDB-M &COLLAB &DBLP\_v1\\
          \midrule
          Group 1 &GNNExplainer  &66.5$\pm$2.3 &67.1$\pm$6.2 &81.1$\pm$10.4   &80.7$\pm$6.6 &69.8$\pm$7.2  &71.9$\pm$4.9 &70.4$\pm$4.2 &45.0$\pm$4.3 &77.6$\pm$5.0 &82.8$\pm$0.4 \\
                  &PGExplainer   &78.6$\pm$11.4 &67.0$\pm$5.5 &82.9$\pm$2.7   &72.2$\pm$2.7 &69.0$\pm$6.4  &72.5$\pm$3.5 &62.2$\pm$4.0 &40.0$\pm$6.5 &75.7$\pm$4.5 &81.2$\pm$0.5 \\
                  &ProtGNN       &77.6$\pm$4.7 &68.2$\pm$5.8 &82.5$\pm$2.5    &79.2$\pm$3.1 &70.8$\pm$2.3  &74.3$\pm$2.6 &58.3$\pm$7.3 &36.0$\pm$5.7 &69.3$\pm$8.6 &80.0$\pm$0.6 \\
                  &VGIB          &72.1$\pm$21.6 &66.5$\pm$6.1 &79.8$\pm$3.9   &80.0$\pm$4.2 &51.7$\pm$10.5 &68.4$\pm$6.5 &50.8$\pm$3.7 &33.9$\pm$3.1 &72.3$\pm$4.8 &80.5$\pm$0.8 \\
                  &PGIB          &76.6$\pm$6.8 &66.9$\pm$5.4 &83.7$\pm$5.8    &75.8$\pm$2.3 &77.4$\pm$4.1  &74.7$\pm$3.1 &57.0$\pm$2.9 &37.0$\pm$3.9 &73.2$\pm$4.9 &81.3$\pm$0.7 \\
                  %& GIP           & 83.5$\pm$5.1 &67.3$\pm$5.8  &80.9$\pm$7.0    &79.4$\pm$2.8 &74.5$\pm$10.9  &79.5$\pm$3.5 &66.5$\pm$4.5 &41.9$\pm$4.3 &78.2$\pm$4.9  &83.1$\pm$0.5  \\
                  &ConfExplainer &82.5$\pm$6.7 &66.8$\pm$6.1  &83.1$\pm$5.2   &75.5$\pm$3.8 &71.8$\pm$8.2  &72.5$\pm$5.2 &65.6$\pm$5.4 &44.9$\pm$4.5 &77.5$\pm$5.2 &82.7$\pm$0.6 \\
                  &GraphTrail    &87.4$\pm$5.0 &69.3$\pm$5.9  &86.7$\pm$4.7   &83.3$\pm$4.9 &77.2$\pm$3.6  &73.5$\pm$7.9 &72.9$\pm$4.9 &46.5$\pm$2.1 &67.3$\pm$4.2 &79.2$\pm$0.5   \\
          \midrule
          Group 2 &WL &87.4$\pm$5.4   &56.0$\pm$3.9 &81.3$\pm$0.6 &81.2$\pm$1.1 &82.4$\pm$0.9 &74.4$\pm$2.6  &67.5$\pm$4.0  &48.4$\pm$4.2  &78.5$\pm$1.7  &77.6$\pm$0.6  \\
                  &WWL &86.3$\pm$7.9   &52.6$\pm$6.8 &87.6$\pm$0.6 & 82.7$\pm$3.0 & 82.7$\pm$3.6 & 73.1$\pm$1.4  & 71.6$\pm$3.8  & 52.6$\pm$3.0    & 81.4$\pm$2.1  & 81.8$\pm$0.6  \\
                  & FWL & 85.7$\pm$7.5   & 59.3$\pm$6.5 & 87.6$\pm$5.0 & 82.0$\pm$2.7 & 82.5$\pm$4.0 & 74.6$\pm$3.8  & 72.0$\pm$4.6  & 49.5$\pm$2.6  & 78.1$\pm$1.2  & 81.9$\pm$0.8  \\
                  & GAWL & 87.3$\pm$6.3   & 61.6$\pm$4.6 & 88.3$\pm$3.8 & 79.9$\pm$4.9 & 83.1$\pm$4.0 & 74.7$\pm$3.0  & 74.6$\pm$4.7  & 50.7$\pm$5.4  & 81.5$\pm$2.0  & 81.1$\pm$0.3  \\
          \midrule
          Group 3 & OT-GNN & 91.6$\pm$4.6   & 68.0$\pm$7.5 & 85.9$\pm$3.3 & 83.5$\pm$3.6 & 84.8$\pm$3.1 & 76.6$\pm$4.0  & 67.5$\pm$3.5  & 52.1$\pm$3.0  & 80.7$\pm$2.9  & 85.4$\pm$0.3  \\
                  & WEGL & 91.0$\pm$3.4   & 66.2$\pm$6.9 & 84.4$\pm$4.6 & 81.8$\pm$1.8 & 82.2$\pm$2.9 & 73.7$\pm$1.9  & 66.4$\pm$2.1  & 50.3$\pm$1.0   & 79.6$\pm$0.5  & 80.4$\pm$0.4  \\
          \midrule
          Group 4 & GIN & 89.4$\pm$5.6   & 64.6$\pm$7.0 & 82.6$\pm$3.5 & 81.6$\pm$4.9 & 82.6$\pm$2.6 & 76.2$\pm$2.8  & 64.3$\pm$3.1  & 50.9$\pm$1.7  & 79.3$\pm$1.7  & 83.3$\pm$0.4  \\
                  & DROPGIN & 90.4$\pm$7.0   & 66.3$\pm$8.6 & 77.8$\pm$2.6 & 82.1$\pm$2.7 & 82.7$\pm$3.3 & 76.9$\pm$4.3  & 66.3$\pm$4.5  & 51.6$\pm$3.2  & 80.1$\pm$2.8  & 83.0$\pm$0.9  \\
                  & SEP & 89.4$\pm$6.1   & 68.5$\pm$5.2 & 86.9$\pm$0.8 & 83.7$\pm$1.8 & 84.1$\pm$1.2 & 76.4$\pm$0.4  & 74.1$\pm$0.6  & 51.5$\pm$0.7  & 81.3$\pm$0.2  & 85.5$\pm$0.3  \\
                  & GMT & 89.9$\pm$4.2   & 70.2$\pm$6.2 & 85.6$\pm$0.8 & 84.3$\pm$2.0 & 83.9$\pm$0.9 & 75.1$\pm$0.6  & 73.5$\pm$0.8  & 50.7$\pm$0.8  & 80.7$\pm$0.5  & 85.2$\pm$0.6  \\
                  & MINCUTPOOL & 90.6$\pm$4.6   & 68.3$\pm$4.4 & 87.2$\pm$1.0 & 83.5$\pm$2.7 & 82.2$\pm$0.9 & 74.7$\pm$0.5  & 72.7$\pm$0.8  & 51.0$\pm$0.7  & 80.9$\pm$0.3  & 84.9$\pm$0.5  \\
                  & ASAP & 87.4$\pm$5.7   & 64.6$\pm$6.8 & 85.3$\pm$1.3 & 81.1$\pm$3.1 & 83.3$\pm$1.6 & 73.9$\pm$0.6  & 72.8$\pm$0.5  & 50.8$\pm$0.8  & 78.6$\pm$0.5  & 82.6$\pm$0.6  \\
                  & WITTOPOPOOL & 89.4$\pm$5.4   & 64.6$\pm$6.8 & 87.8$\pm$2.4 & 83.7$\pm$3.6 & 84.5$\pm$2.8 & \underline{80.0$\pm$3.2}  & 72.6$\pm$1.8  & \underline{52.9$\pm$0.8}  & 80.1$\pm$1.6  & 85.6$\pm$0.8  \\
                  & GRDL & 92.1$\pm$5.9   & 68.3$\pm$5.4 & \textbf{92.0$\pm$1.1} & \textbf{85.9$\pm$2.2} & 85.1$\pm$2.3 & \textbf{82.6$\pm$1.2}  & \underline{74.8$\pm$2.0}  & 52.9$\pm$1.8  & 79.8$\pm$0.9  & \underline{86.2$\pm$0.5}  \\
          \midrule
          Group 5 & GRAPHORMER & 89.6$\pm$6.2   & \textbf{71.4$\pm$5.2} & 85.3$\pm$2.3 & 84.0$\pm$2.9 & 83.7$\pm$2.1 & 76.3$\pm$2.7  & 70.3$\pm$0.9  & 48.9$\pm$2.0  & 80.3$\pm$1.3  & 84.8$\pm$0.6  \\
                   & SAT & 92.6$\pm$4.3   & 68.3$\pm$4.9 & \underline{91.7$\pm$2.1} & 84.1$\pm$2.6 & 84.4$\pm$3.4 & 77.7$\pm$3.2  & 70.0$\pm$1.3  & 47.3$\pm$3.2  & 80.6$\pm$0.6  & 85.9$\pm$0.3  \\
          \midrule
          Ours   & GCBM  & \underline{93.5$\pm$4.0} & 68.5$\pm$5.8 & 89.7$\pm$4.2 & 84.3$\pm$3.5 & \underline{85.7$\pm$3.6} & 75.3$\pm$2.1 & 74.2$\pm$4.1 & 52.5$\pm$3.1 & \underline{82.4$\pm$1.6} & 86.2$\pm$0.7 \\ 
                  & GCBM-E  & \textbf{93.6$\pm$4.0} & \underline{70.6$\pm$5.4} & 90.4$\pm$2.8 & \underline{85.9$\pm$3.2} & \textbf{86.1$\pm$3.2} & 76.8$\pm$2.7  & \textbf{75.4$\pm$4.3} & \textbf{53.1$\pm$2.4} & \textbf{82.8$\pm$1.4} & \textbf{87.6$\pm$0.5} \\
          \bottomrule
        \end{tabular}
        }
  %     \end{sc}
  %   \end{small}
  % \end{center}
  % \vskip -0.1in
\end{table*}

The representation $\mathbf{h}_{G_i}^{(k)}$ of $G_i$ is obtained by aggregating all embeddings of WL-subtrees of height $k$ rooted at each node in $G_i$:
\begin{equation}\label{eq:hg}
\mathbf{h}_{G_i}^{(k)} = \sum_{T_{v_{\pi(u)}}^{(k)} \in S_{G_i}^{(k)}} \mathbf{h}_{T_{v_{\pi(u)}}^{(k)}}.
\end{equation} 
\vskip -0.1in

The graph concept label $\mathbf{c}_{i}^{(k)}$ in Equation (\ref{eqn:concept_label}) is updated by replacing $\mathbf{x}_{G_i}^{(k)}$ with the graph embedding $\mathbf{h}_{G_i}^{(k)}$ and $\mathbf{t}_{j}^{(k)}$ by the graph concept embedding $\mathbf{h}_{T_j^{(k)}}$.
The context-aware embeddings capture correlations and co-occurrence information between concepts, and these dense embeddings can alleviate the high-dimensional sparsity of one-hot vectors for nodes and concepts. The refined concept labels better align with the intrinsic graph structure, providing a more effective concept bottleneck for GNNs. The generation process of embedding-based graph concept labels is detailed in Algorithm~\ref{alg:gcbm_lm} (see Appendix~\ref{Appendix_alg}).

\section{Experimental Evaluation}

In this section, we validate the performance of GCBMs in graph classification and test the interpretability of the concept bottleneck layer via human evaluation and quantitative assessment. We further analyze the contributions of the core components via ablation studies, conduct parameter sensitivity analysis, and perform intervention experiments.\footnote{Code available at \url{https://anonymous.4open.science/r/gcbms-332C131/}}

\subsection{Experimental Setup}

We conduct classification experiments on 10 public datasets and 2 large-scale class-imbalanced datasets, and perform interpretability experiments on 2 specialized interpretation datasets.
To comprehensively evaluate GCBM’s performance, we select 5 groups of representative advanced methods as baselines: interpretable GNN models (Group 1), WL-based graph kernels (Group 2), optimal transport-based models (Group 3), mainstream GNN models (Group 4), and graph Transformer models (Group 5). Detailed statistics of datasets and baselines, as well as the parameter settings and hardware configuration, are provided in Appendix~\ref{Appendix_data}.

\subsection{Experimental Results and Analysis}

We refer to the GCBM constructed with concept bottlenecks derived from one-hot representations and dense embeddings (generated via a Transformer) as GCBM and GCBM-E, respectively.

\subsubsection{Graph Classification Performance} 

Table~\ref{tab:accuracy} reports the classification accuracy of the two GCBM versions on 10 datasets of various domains, while Table~\ref{tab:auc_roc_large_datasets} presents GCBM-E's AUC scores~\cite{wang2024graph,luo2020parameterized} on 2 class-imbalanced datasets.

Table~\ref{tab:accuracy} shows that on 10 small, medium, and large datasets, GCBM-E  ranks among the top 2 on 8 datasets, and achieves the highest accuracy on 6 of them. 
Specifically, it nearly outperforms all baselines in Groups 1--3 across datasets, outperforms most Group 4 baselines (only slightly lagging behind the best baselines on BZR and PROTEINS), and remains leading over Group 5 baselines on MUTAG, IMDB-B, IMDB-M, and COLLAB (performing comparably on PTC-MR and BZR).
Compared with GCBM-E, GCBM uses simple one-hot vectors as graph concept embeddings to generate concept scores. Its average accuracy decreases by 1.0\%. This is because one-hot representations fail to capture contextual differences of the same WL-subtree across different graphs, and ignore correlations and co-occurrences between concepts, leading to degraded quality of concept scores. Additionally, their high sparsity hinders effective graph embedding learning.
Table~\ref{tab:auc_roc_large_datasets} shows that on 2 large-scale class-imbalanced graphs, GCBM-E  achieves AUC scores of 0.888 and 0.862, significantly outperforming the best baseline methods.

\begin{table}[!htb]
  \caption{AUC scores (mean\%$\pm$standard deviation\%) of 10-fold cross-validation of each method on class-imbalanced benchmark datasets (only results of methods that can complete the experiments in 24 hours are reported).}
  \label{tab:auc_roc_large_datasets}
  \begin{center}
    \begin{small}
      % \begin{sc}
        \setlength{\tabcolsep}{4pt}
        \begin{tabular}{lcc}
          \toprule
          Dataset   & PC-3 & MCF-7 \\
          \midrule
          % PGIB    &- &- \\
          GNNExplainer & 84.1$\pm$1.6 & 79.7$\pm$1.5 \\
          ConfExplainer & 83.7$\pm$1.5 & 79.2$\pm$1.4     \\
          GIN     & 84.6$\pm$1.4   & 80.6$\pm$1.5   \\
          GRDL    & \underline{85.1$\pm$1.6}   & \underline{81.4$\pm$1.3}   \\
          \midrule
          GCBM-E  & \textbf{88.8$\pm$1.7} & \textbf{86.2$\pm$1.3} \\
          \bottomrule
        \end{tabular}
      % \end{sc}
    \end{small}
  \end{center}
  \vskip -0.2in
\end{table}

Detailed analysis of running time and computational complexity is presented in Appendix~\ref{appendix:time_cost}.
In summary, GCBMs achieve excellent performance on multiple datasets, especially in class-imbalanced scenarios, for which we further provide theoretical proof of their superiority in AUC scores (see Appendix~\ref{Appendix_auc}).

\subsubsection{Interpretability Evaluation}

To verify the interpretability of GCBMs, we conduct human evaluation and quantitative assessment.

\paragraph{Human Evaluation}
We perform human evaluation from two perspectives. First, we visualize the classification layer weights by plotting a sankey diagram~\cite{oikarinen2023label} for the DHFR dataset in Figure~\ref{fig:sankey_diagram}.
To obtain the specific functional groups corresponding to the concepts in the sankey diagram, we map the nodes of the associated WL-subtrees (graph concepts) back to their original molecules and extract the relevant atoms and bonds, as shown in Figure~\ref{fig:functional groups}.
%=============================================================
\begin{figure}[!htb]
  \begin{center}
    \centerline{\includegraphics[width=0.85\columnwidth]{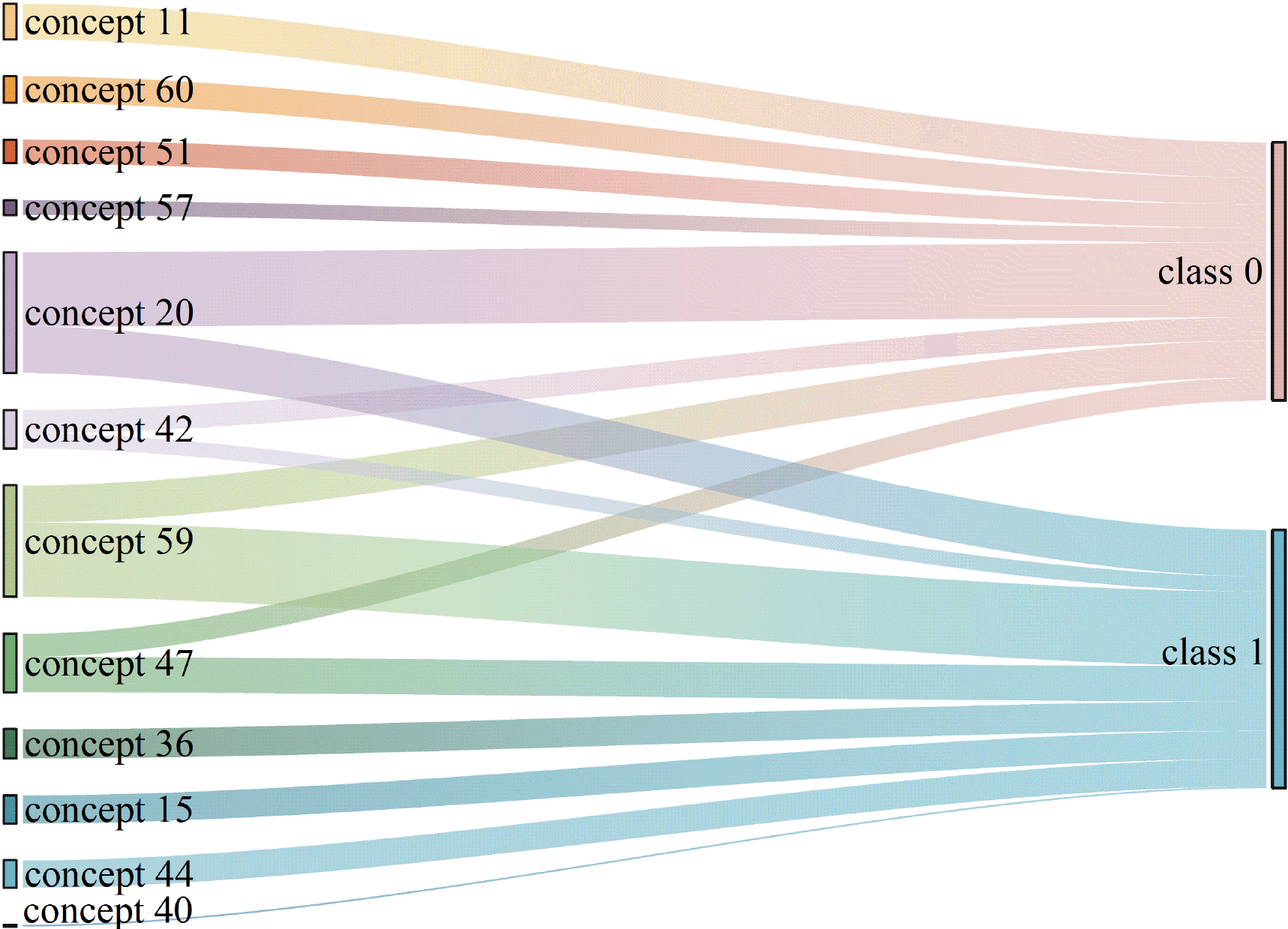}}
    \caption{
      The sankey diagram of the classification layer weights of GCBM on the DHFR dataset.  
      The width of the connection between each concept and class corresponds to the exponent of the absolute value of the learned concept weight. To highlight core concepts, only the top 8 concepts with the largest absolute values of weights in each class are shown.
    }
    \label{fig:sankey_diagram}
  \end{center}
    \vskip -0.2in
\end{figure}
%=============================================================
\begin{figure}[!htb]
  \begin{center}
    \centerline{\includegraphics[width=0.8\columnwidth]{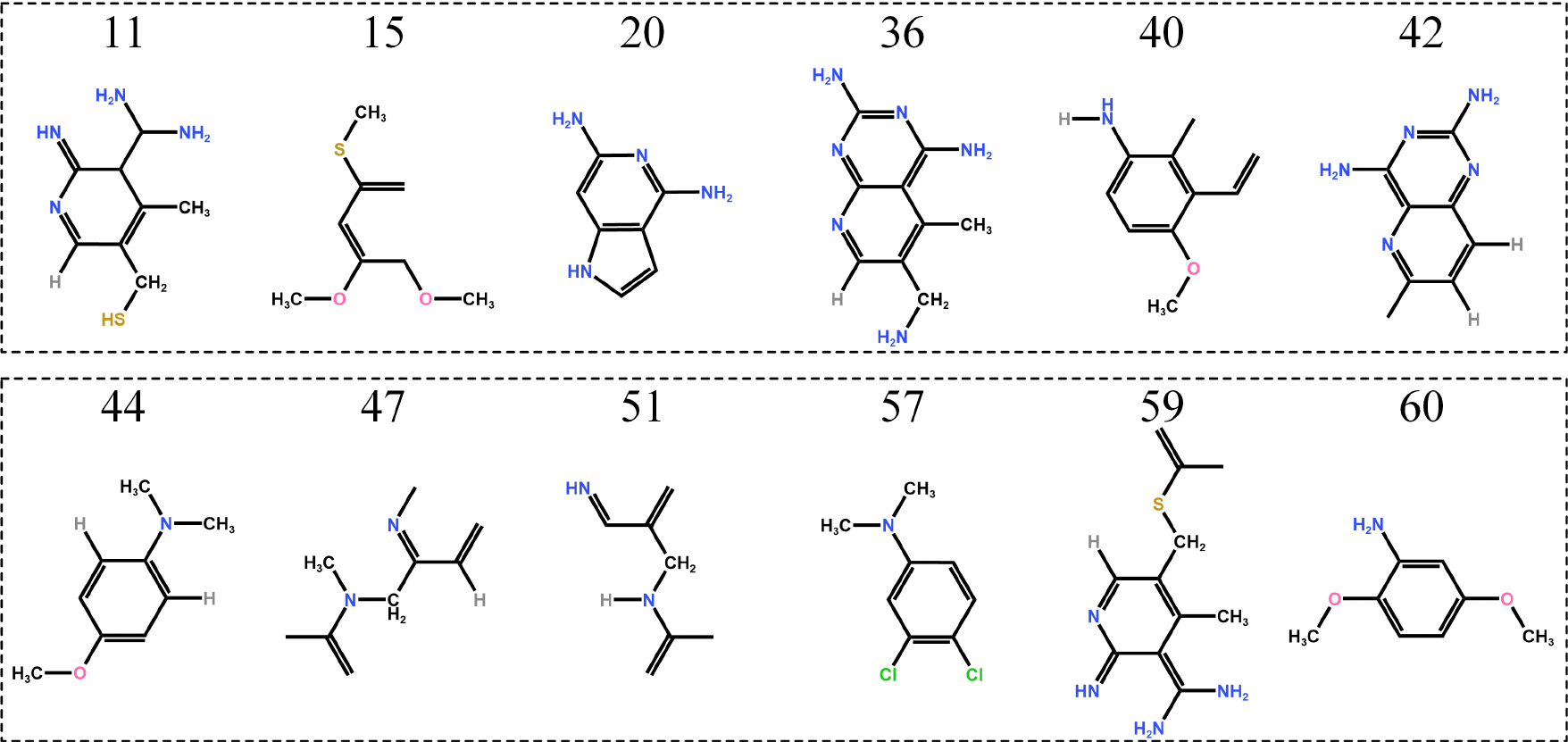}}
    \caption{
      Concept indices and their corresponding specific functional groups in molecules from the DHFR dataset.
    }
    \label{fig:functional groups}
  \end{center}
  \vskip -0.2in
\end{figure}
%=============================================================

As observed in Figure~\ref{fig:sankey_diagram}, graph concepts 20, 42, 47, and 59 have high weights across both classes, indicating their critical role in molecule classification. Molecules from different classes present distinct concept score distributions for these cross-class concepts. When making decisions, GCBMs prioritize the scores on these concepts. This visualization provides a clear global explanation that reveals how GCBMs make decisions through the linear combinatorial reasoning of these global concepts.

Second, we verify that these discriminative concepts correspond to human-understandable semantic information.
Specifically, we restore the concepts to their original graph structures and validate their consistency with domain knowledge.
As shown in Figure~\ref{fig:molecular samples}, we present the graph structures of sample molecules from two classes that contain concepts 20 and 42.
%=============================================================
As observed in Figure~\ref{fig:molecular samples}, concept 20 corresponds to a purine ring structure
and concept 42 to a pteridine ring structure. 
Both substructures have been confirmed by biochemical studies as key active groups of dihydrofolate reductase inhibitors~\cite{srinivasan2015insights,tassone2021evidence,possart2023silico,hoarau2023discovery}, and are highly relevant to the target task labels.
These results confirm that GCBMs can identify meaningful concepts consistent with real-world domain knowledge through concept selection, verifying the interpretability of the graph concept bottleneck.
\begin{figure}[!htb]
  \begin{center}
    \centerline{\includegraphics[width=0.7\columnwidth]{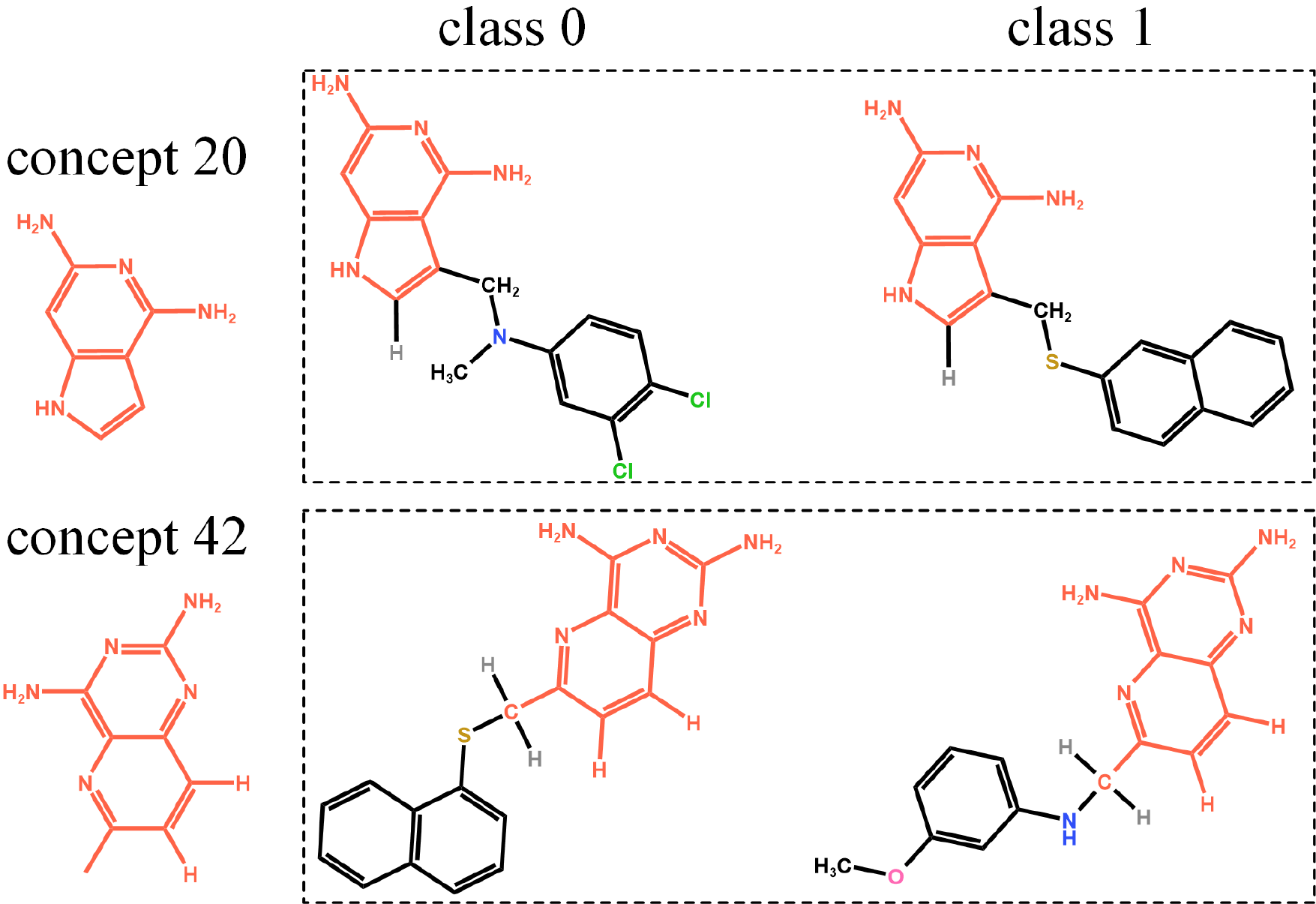}}
    \caption{
      Key graph concepts and sample molecules with these concepts from each class in the DHFR dataset. The red part indicates the target substructure that each key concept is mapped back to in the original graph.
    }
    \label{fig:molecular samples}
  \end{center}
    \vskip -0.2in
\end{figure}
%=============================================================

\paragraph{Quantitative Assessment}
We adopt the AUC score as the interpretability metric, which quantifies the alignment degree between the true and predicted interpretable subgraphs.
Specifically, the quantitative assessment is conducted on 2 datasets Solubility and Benzene with annotations of the true interpretable subgraphs.
Using the GCBM's concept selection method, we filter the top $M_I=2$ key concepts from the graph concept vocabulary $\mathcal{V}^{(k)}$, map these concepts back to each original graph to derive their specific substructures, and finally take the union of nodes from all substructures as GCBM's predicted interpretable subgraphs for each graph.
Table~\ref{tab:Interpretation performance} reports GCBM’s AUC scores on these 2 datasets.

The results show that GCBM achieves higher AUC scores, confirming that the concepts used to construct the explicit concept layer can accurately correspond to the interpretable substructures in graphs.
Notably, unlike baselines, GCBM's subgraph prediction avoids complex training processes. 
Its key concept extraction and subgraph mapping are achieved solely by adjusting the hyperparameter $M_I$, greatly reducing the computational cost.
The parameter sensitivity analysis of $M_I$ is provided in Appendix~\ref{Appendix_M_I}.

%=============================================================
\begin{table}[!htb]
  \caption{AUC scores (mean\%) of each method on datasets Solubility and Benzene for interpretability evaluation. The selected two concepts are shared for GCBM and GCBM-E, leading to the same results.}
  \label{tab:Interpretation performance}
  \begin{center}
    \begin{small}
      % \begin{sc}
        \setlength{\tabcolsep}{4pt}  
        \begin{tabular}{lccccc}
          \toprule
          Dataset       &Solubility   &Benzene   \\
          \midrule
          GNNExplainer  & 51.8   & 54.7    \\
          PGExplainer   & 77.9   & 48.2    \\
          %SubgraphX    & 83.4   & 77.1    \\
          ProtGNN       & 56.2   & 66.4    \\
          VGIB          & 67.5   & 62.2    \\
          PGIB          & 65.0   & 61.3    \\
          ConfExplainer & \underline{80.5}   & \underline{75.3}    \\
          \midrule
          GCBM/GCBM-E         & \textbf{91.2}   & \textbf{83.6}    \\
          \bottomrule
        \end{tabular}
      % \end{sc}
    \end{small}
  \end{center}
  \vskip -0.2in
\end{table}
%=============================================================

\subsubsection{Ablation Studies}

To verify the importance of GCBM-E’s key components, we construct 3 more variants by removing each component individually and compare their performance on 4 benchmark datasets.
Table~\ref{tab:gcbm_variants_accuracy} shows that GCBM-E achieves the best performance across all datasets.
GCBM-$\mathcal{V}$ replaces the highly discriminative concept set $\mathcal{V}_{pruned}^{(k)}$ with $M$ randomly selected concepts. Its accuracy is reduced across all datasets, as many selected concepts are ineffective: they have low information gain and contribute little to class discrimination, leading to similar concept scores for graphs of different classes. Consequently, the classification layer struggles to distinguish graphs based on these scores.
GCBM-GCN and GCBM-G replace the backbone GIN with GCN~\cite{kipf2017semi} and the graph concept embedding model Transformer with GloVe~\cite{pennington2014glove}, respectively. The performance gap between these variants and GCBM-E mainly stems from differences in the representation learning capabilities of the substituted models. This result suggests that GCBMs can achieve stronger performance with more powerful backbones and graph concept embedding models.
In summary, all key components of GCBM-E contribute significantly to performance. 
%=============================================================
\begin{table}[!htb]
  \caption{Classification accuracy (mean\%$\pm$standard deviation\%) of GCBM-E and its variants on benchmark datasets. GCBM-$\mathcal{V}$ adopts a randomly selected concept set; GCBM-GCN replaces the backbone GIN with GCN; GCBM-G replaces Transformer with GloVe.}
  \label{tab:gcbm_variants_accuracy}
  \begin{center}
    % \begin{small}
      % \begin{sc}
      %   \setlength{\tabcolsep}{4pt}
      \resizebox{\linewidth}{!}{
        \begin{tabular}{lcccc}
          \toprule
          Dataset       & MUTAG      & COX2      & DHFR      & IMDB-M    \\
          \midrule
          GCBM-$\mathcal{V}$      & 92.5$\pm$4.3   & 85.0$\pm$2.5  & 85.7$\pm$3.3  & \underline{53.0$\pm$3.3}  \\
          GCBM-GCN      & 87.7$\pm$6.2   & 85.0$\pm$3.4  & 84.2$\pm$2.4  & 52.2$\pm$3.4  \\
          GCBM-G & \underline{93.0$\pm$4.2} & \underline{85.7$\pm$2.9} & \underline{86.0$\pm$3.7} &52.4$\pm$2.1 \\
          \midrule
          GCBM-E  & \textbf{93.6$\pm$4.0} & \textbf{85.9$\pm$3.2} & \textbf{86.1$\pm$3.2} & \textbf{53.1$\pm$2.4} \\
          \bottomrule
         \end{tabular}
         }
      % \end{sc}
    % \end{small}
  \end{center}
  \vskip -0.2in
\end{table}
%=============================================================

\subsubsection{Sensitivity Analysis}

In this section, we evaluate the sensitivity of GCBM-E's performance to its parameters $\lambda_c$ (graph concept loss weight), $K$ (the maximum height of the WL-subtrees), and $\lambda_r$ (sparse regularization weight) on the same 4 datasets as the above section.
%=============================================================
\begin{figure*}[!htb]
  % \vskip 0.2in
  \begin{center}
    \centerline{\includegraphics[width=2.05\columnwidth]{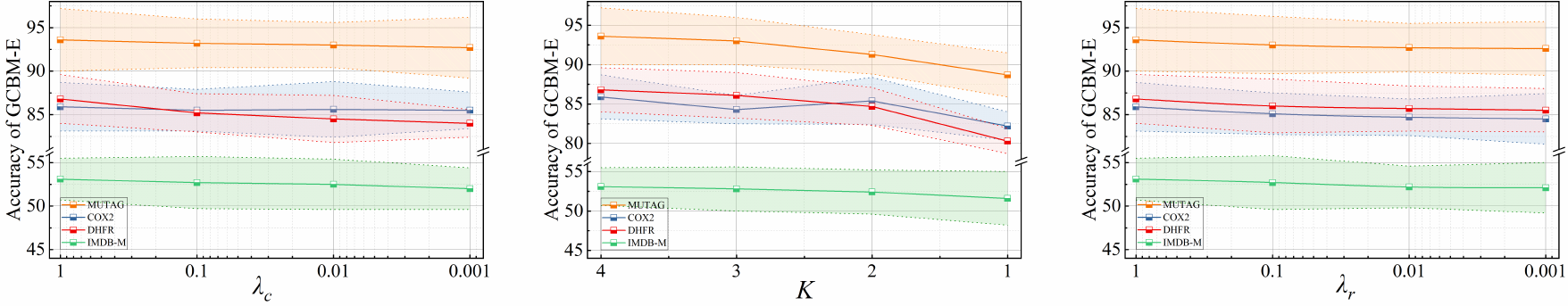}}
    \caption{
      Parameter sensitivity analysis of GCBM-E: $\lambda_c$ (graph concept loss weight), $K$ (the maximum height of the WL-subtrees), and $\lambda_r$ (sparse regularization weight).
    }
    \label{fig:sensitivity}
  \end{center}
\end{figure*}
%=============================================================
\begin{table*}[t]
  \caption{Intervention results of GCBM and GCBM-E on the DHFR dataset. $W_{F[\text{cls}, \text{cpt}]}$ denotes the weight of the target concept (cpt) corresponding to the class (cls) in the final classification layer weight matrix $W_F$. NMR indicates no modification required. Accuracy represents the classification accuracy (mean\%) before and after intervention.}
  \label{tab:intervention}
  \centering
  % \begin{center}
    % \begin{small}
    %   \begin{sc}
    %     \setlength{\tabcolsep}{4pt}  
        \resizebox{\linewidth}{!}{
        \begin{tabular}{lccccc}
          \toprule
          Model       &Intervention Target                          & $W_{F[\text{cls 0}, \text{cpt j}]}$          & $W_{F[\text{cls 1}, \text{cpt j}]}$          &Impact                                      &Accuracy       \\
          \midrule
          GCBM        & $\hat{\mathbf{c}}_{i}$ of misclassified samples      & NMR                                           & NMR                                           & +12 corrections                               & 85.7 $\to$ 87.3 \\
          GCBM-E       & $W_{F[\text{cls 0}, \text{cpt 20}]}$, $W_{F[\text{cls 1}, \text{cpt 20}]}$ & 0.384 $\to$ -0.037                       & -0.011 $\to$ 0.410                       & +11 corrections; +6 new errors                & 86.1 $\to$ 86.8 \\
          GCBM-E       & $W_{F[\text{cls 0}, \text{cpt 42}]}$, $W_{F[\text{cls 1}, \text{cpt 42}]}$ & 0.188 $\to$ -0.247                        & 0.094 $\to$ 0.528                        & +8 corrections; +5 new errors                 & 86.1 $\to$ 86.5 \\
          GCBM-E       & $W_{F[\text{cls 0}, \text{cpt 20/42}]}$, $W_{F[\text{cls 1}, \text{cpt 20/42}]}$ & 0.384/0.188 $\to$ -0.037/-0.247 & -0.011/0.094 $\to$ 0.410/0.528 & +15 corrections; +8 new errors                & 86.1 $\to$ 87.0 \\
          \bottomrule
        \end{tabular}
        }
    %   \end{sc}
    % \end{small}
  % \end{center}
  % \vskip -0.1in
\end{table*}
%=============================================================
Figure~\ref{fig:sensitivity} demonstrates that adjusting GCBM-E’s key parameters results in performance degradation.
Specifically, decreasing $\lambda_c$ or $\lambda_r$ reduces the model’s accuracy, while decreasing $K$ leads to a more significant performance drop.
Compared to weakening the alignment between graph representations and concept labels (reducing $\lambda_c$) or relaxing the constraints of sparse regularization (reducing $\lambda_r$), pruning the concept vocabulary (decreasing $K$) fundamentally undermines the quality of the concept bottleneck, thus causing more severe performance impairment.
Overall, reducing $\lambda_c$, $K$, or $\lambda_r$ consistently degrades GCBM-E’s performance across all datasets.

\subsubsection{Intervention Experiments}

To verify the intervenability of the concept layer in GCBMs, we adopt different intervention strategies for GCBM and GCBM-E, and conduct experiments on their misclassified graphs in the DHFR dataset.
Specifically, GCBM’s concept score for graph $G_i$ on concept $T_j$ relies on $G_i$'s frequency-based representation vector and the one-hot vector of $T_j$. For misclassified graphs, we directly edit their predicted concept vectors $\hat{\mathbf{c}}_{i}$~\cite{koh2020concept} and feed the modified concept vectors into the classification layer to re-predict the class, thus correcting the misclassified graphs.
For GCBM-E, concept scores are computed via the similarity of Transformer-learned embeddings, and the associations between concepts and classes are measured by the classification layer weights. For graph $G_i$ satisfying $\hat{\mathbf{y}}_{i} \neq \mathbf{y}_{i}$ and $\text{sim}(\hat{\mathbf{c}}_{i}, \mathbf{c}_{i}) \geq {\tau}_c$ (i.e., valid concept activation but incorrect classification~\cite{oikarinen2023label}), where ${\tau}_c$ denotes the similarity threshold for valid concept activation, we design three intervention schemes: modifying the classification layer weights of discriminative concept 20 alone, modifying those of concept 42 alone, and performing joint intervention on both concepts explore the synergistic effect of key concepts. Detailed modification rules are provided in Appendix~\ref{Appendix_intervention}, and intervention results are shown in Table~\ref{tab:intervention}.

The results show that GCBM’s classification accuracy increases from 85.7\% to 87.3\%, with the correction of misclassified graphs achieving a net improvement.
For GCBM-E, all three intervention schemes enhance overall performance, and the joint intervention exhibits the most significant effect: it corrects 15 misclassified graphs (outperforming the sum of individual interventions on each concept) with only 8 new misclassifications, boosting accuracy from 86.1\% to 87.0\%.
This superior performance validates the combinatorial reasoning of GCBMs. Key concepts do not act independently but synergistically contribute to the model’s decision-making.
These intervention experiments confirm GCBMs’ advantages in interpretability and decision controllability, enabling targeted optimization of prediction results through intervention on core concept combinations.

\section{Conclusion}

This paper proposes a globally and intrinsically interpretable paradigm for GNNs called Graph Concept Bottleneck Models (GCBMs).
As a generic framework, the explicit graph concept bottleneck layer of GCBMs enforces the combinatorial reasoning of GNNs' predictions to fit the soft logical rule over graph concepts, achieving both excellent discriminative power and interpretability.
Experiments on multiple datasets verify the superior predictive performance of GCBMs, with notable advantages on large-scale class-imbalanced datasets. 
For interpretability, human evaluation of the molecular graphs confirms that the concepts extracted by GCBMs are highly aligned with domain knowledge, while quantitative assessment demonstrates that these concepts accurately correspond to interpretable subgraphs in graphs.
Moreover, intervention experiments validate the controllability of GCBMs’ decision-making process.
In the future, we plan to expand graph concept types (e.g., textual descriptions of graph properties) using large language models and explore more advanced backbone GNNs (e.g., graph Transformers) to further enhance the applicability of GCBMs on complex graph data.

%================================================

\section*{Impact Statement}
This paper focuses on advancing technological innovation and development in the field of Machine Learning. While our research may yield multifaceted societal impacts, no specifc aspects require special emphasis following careful evaluation.

% In the unusual situation where you want a paper to appear in the
% references without citing it in the main text, use \nocite
\nocite{langley00}

\bibliography{example_paper}
\bibliographystyle{icml2026}

% APPENDIX

\newpage
\appendix
\onecolumn  %%%%%%%%%%%  one-column  or  two-column

\section{The Generation of Embedding-based Graph Concept Labels}
\label{Appendix_alg}

To clarify our method, we present the generation process of the embedding-based graph concept labels in Algorithm~\ref{alg:gcbm_lm}.

\begin{algorithm2e}[!htb]
  \caption{Generation of Embedding-based Graph Concept Labels}
  \label{alg:gcbm_lm}
  \KwIn{Graph dataset $\mathcal{D} = \{ (G_i, \mathbf{y}_i) \}_{i=1}^n$, Max WL-subtree height $K$, Number of selected concepts $M$, Pre-trained language model $LM$}
  \KwOut{Complemented dataset $\mathcal{D}=\{(G_i, \mathbf{c}_i, \mathbf{y}_i)\}_{i=1}^n$}
  \tcp{\textbf{Phase 1: Build graph sentence corpus}}
  Initialize the graph sentence corpus $\mathcal{C}_{gs} = \emptyset$\;
  \For{$k = 1$ \KwTo $K$}{
    Initialize the $k$-level graph concept corpus $\mathcal{C}^{(k)} = \emptyset$\;
    Extract the WL-subtree of height $k$ for each node $v$ in each graph $G_i$ and add it to $\mathcal{C}^{(k)}$\;
    Form the graph concept vocabulary $\mathcal{V}^{(k)}=\{T_1^{(k)},\ldots,T_{|\mathcal{V}^{(k)}|}^{(k)}\}$ with the unique WL-subtrees in $\mathcal{C}^{(k)}$\;
    Build the frequency matrix $\mathbf{X}^{(k)} \in \mathbb{R}^{n \times|\mathcal{V}^{(k)}|}$ of $\mathcal{D}$, where $\mathbf{X}^{(k)}[i][j]= \text{Frequency}(G_i,T_j^{(k)})$\;
    Calculate information gain for each concept $IG(\mathcal{D},T_j^{(k)})$ via Eq.~(\ref{ig})\;
    Select top-$M$ concepts from $\mathcal{V}^{(k)}$ to form the graph concept set $\mathcal{V}_{pruned}^{(k)}= \{T_1^{(k)},\ldots,T_{M}^{(k)}\}$\;
    Build ``graph sentences'' $S_i^{(k)}$ for each $G_i$ and $S_{\mathcal{V}_{pruned}^{(k)}}$ for $\mathcal{V}_{pruned}^{(k)}$ with sorted WL-subtree codes\;
    Form the $k$-level graph sentence corpus $\mathcal{C}_{gs}^{(k)}$ with $S_i^{(k)}$ and $S_{\mathcal{V}_{pruned}^{(k)}}$\;
    $\mathcal{C}_{gs} \leftarrow \mathcal{C}_{gs} \cup \mathcal{C}_{gs}^{(k)}$\;
  }
  \tcp{\textbf{Phase 2: Generate graph concept label}}
  Train $LM$ on $\mathcal{C}_{gs}$\;
  \For{$k = 1$ \KwTo $K$}{
    Generate node embeddings $\mathbf{h}_{T_{v_{\pi(u)}}^{(k)}}$ and concept embeddings $\mathbf{h}_{T_j^{(k)}}$ via Eq.~(\ref{eq:ht}) and Eq.~(\ref{eq:tj})\;
    Aggregate the node embeddings of $G_i$ and generate graph embeddings $\mathbf{h}_{G_i}^{(k)}$ via Eq.~(\ref{eq:hg})\;
    Calculate concept scores via Eq.~(\ref{eqn:concept_label_}) and generate $k$-level concept labels $\mathbf{c}_i^{(k)}$ via Eq.~(\ref{eqn:concept_label})\;
  }
  Generate global graph labels $\mathbf{c}_i=\text{Concat}(\mathbf{c}_i^{(1)}, \ldots, \mathbf{c}_i^{(K)})$\;
  Complement the dataset $\mathcal{D}=\{(G_i, \mathbf{c}_i, \mathbf{y}_i)\}_{i=1}^n$ with $\mathbf{c}_i$\;
  \Return{$\mathcal{D}$}
\end{algorithm2e}

\section{Experimental Setup}
\label{Appendix_data}
\paragraph{Datasets}
We conducted experiments on 10 public graph datasets (covering bioinformatics: MUTAG, PTC-MR, BZR, COX2, DHFR, PROTEINS; social networks: IMDB-B, IMDB-M, COLLAB, DBLP\_v1) for graph classification tasks across small, medium, and large scales,
along with 2 large-scale class-imbalanced datasets PC-3 and MCF-7 to test performance in imbalanced scenarios, and 2 specialized interpretation datasets Solubility~\cite{duvenaud2015convolutional}, Benzene~\cite{sanchez2020evaluating} for interpretability evaluation. Dataset statistics are available in~\cite{Morris+2020}.

\paragraph{Baselines} 
We select 5 groups of representative advanced methods as baselines for a comprehensive evaluation, categorized as follows.

Interpretable GNN models (Group 1): These methods focus on identifying graph substructures that support the model’s correct predictions, including GNNExplainer~\cite{ying2019gnnexplainer}, PGExplainer~\cite{luo2020parameterized}, ProtGNN~\cite{zhang2022protgnn}, VGIB~\cite{yu2022improving}, PGIB~\cite{seo2023interpretable}, ConfExplainer~\cite{zhang2025your}, GraphTrail~\cite{armgaan2024graphtrail}.

WL-based graph kernels (Group 2): These methods measure graph similarity via hand-crafted kernel functions, serving as classical baselines in graph classification, including WL-subtree kernel~\cite{shervashidze2011weisfeiler}, WWL~\cite{togninalli2019wasserstein}, FWL~\cite{schulz2022graph}, and GAWL~\cite{giannis2023graph}.

Optimal transport-based models (Group 3): These methods align graph distribution features using optimal transport theory and perform prominently in similarity measurement tasks, including OT-GNN~\cite{chen2020optimal} and WEGL~\cite{kolouri2020wasserstein}.

Mainstream GNN models (Group 4): These methods capture graph structural features through hierarchical aggregation, representing the mainstream of graph representation learning. These models include GIN~\cite{xu2018powerful}, DropGIN~\cite{papp2021dropgnn}, SEP~\cite{wu2022structural}, GMT~\cite{baek2021accurate}, MinCutPool~\cite{bianchi2020spectral}, ASAP~\cite{ranjan2020asap}, WITTOPOPOOL~\cite{chen2023topological}, and GRDL~\cite{wang2024graph}.

Graph Transformer models (Group 5): These methods introduce self-attention mechanisms to model long-range dependencies in graphs, demonstrating potential in handling complex graph structures. These models include GRAPHORMER~\cite{ying2021transformers} and SAT~\cite{chen2022structure}.

\paragraph{Parameter Settings and Hardware Configuration}
The default parameters for each comparison method are used. For GCBMs:

We adopt the Transformer variant BERT~\cite{devlin2018bert} as the LM. Both the number of hidden layers and that of attention heads are set to 4, and the hidden dimension is set to 64. The maximum length of text sequences is set to the length of the longest sequence among the ``graph sentence'' in the dataset. Input tokens are randomly masked at a 15\% rate for self-supervised training. The BERT model is trained for 3 epochs with cross-entropy as the loss function, and the batch size is adjusted according to the dataset size. 

GCBM's backbone GIN consists of 4 layers with a fixed hidden dimension of 64.
The readout operation follows the default setting of GIN, using global sum pooling. To enhance the propagation of deep representations, we introduce residual connections for each layer with learnable residual weights. In addition, we select $M=64$ highly discriminative concepts for each layer and dynamically adjust the importance of concepts at different levels via learnable hierarchical concept weights. Furthermore, a dropout mechanism is introduced in each layer to prevent overfitting. 

During training, concept labels are incorporated to compute concept loss; during testing, no concept labels are required, and the model automatically predicts concept labels and outputs classification results. The total number of training epochs is set to 300, with the Adam~\cite{kingma2017adam} optimizer employed. The learning rate scheduling strategy is ReduceLROnPlateau, with a minimum improvement threshold of 0.1 and a minimum learning rate of $10^{-6}$. We use the Optuna~\cite{optuna_2019} optimization framework to search for hyperparameters, including dropout rate, initial learning rate, weight decay, learning rate decay factor, patience, and cooldown period. Additionally, a gradient clipping strategy is applied to stabilize training and prevent gradient explosion. Experiments perform 10-fold cross-validation.

Experiments are conducted on a server equipped with an Intel (R) Xeon (R) Gold 6226R processor, 256GB memory, an NVIDIA RTX 3090 GPU, and Ubuntu 18.04.6 LTS as the operating system. The deep learning framework used is PyTorch 1.13.0.

\section{Runing Time and Complexity Analysis}  
\label{appendix:time_cost}
Table~\ref{tab:preprocessing time} reports the preprocessing time of GCBM-E  on different datasets, including three components: WL-subtree extraction, information gain computation, and Transformer training. Table~\ref{tab:preprocessing time} shows that the preprocessing time of GCBM-E  is minimal on small datasets and remains acceptable on large datasets.
%===============================================
\begin{table}[!htb]
  \caption{Preprocessing time (in seconds) of each component in GCBM-E .}
  \label{tab:preprocessing time}
  \begin{center}
    \begin{small}
      % \begin{sc}
        \setlength{\tabcolsep}{3pt}
        
        \begin{tabular}{lccccc}
          \toprule
          Component                      & MUTAG    & COX2    & DHFR       & IMDB-M    & COLLAB     \\
          \midrule
          WL-subtree extraction          & 4.43    & 4.97    & 6.49       & 7.54      & 16840.92    \\
          Information gain computation    & 0.68    & 1.20    & 1.24       & 1.19      & 2435.85    \\
          Transformer training           & 7.27    & 10.53   & 15.59     & 22.11    & 39720.67    \\
          \bottomrule
        \end{tabular}
        
      % \end{sc}
    \end{small}
  \end{center}
  % \vskip -0.1in
\end{table}

Table~\ref{tab:training time} reports the training time of GCBM, GCBM-E and several representative baseline models.
As shown in Table~\ref{tab:training time}, WWL (a WL-based graph kernel method) achieves the shortest training time, as graph kernel methods do not require constructing or training neural networks. Among GNN-based methods, GIN is implemented via TensorFlow, while all other compared models are developed with PyTorch. For fair comparison, all GNN-based methods use a unified setup of 350 training epochs and a batch size of 64. These results confirm that GCBMs yield superior training efficiency, achieving a good balance between computational cost and predictive performance.

%=================================================
\begin{table}[!htb]
  \caption{Training time (in seconds) of GCBMs and representative baselines.}
  \label{tab:training time}
  \begin{center}
    \begin{small}
      % \begin{sc}
        \setlength{\tabcolsep}{4pt}
        
        \begin{tabular}{lccccc}
          \toprule
          Dataset       & MUTAG    & COX2     & DHFR     & IMDB-M    & COLLAB     \\
          \midrule
          GNNExplainer & 388.22   & 677.19   & 771.87   & 2058.63   & 122602.41  \\
          WWL          & 4.75     & 44.52    & 117.29   & 263.62    & 10044.34   \\
          GIN          & 392.35   & 726.21   & 981.19   & 2452.42   & 126747.07  \\
          GRDL         & 195.76   & 356.20   & 541.52   & 1438.94   & 107040.14  \\
          \midrule
          GCBM         & 177.60   & 337.38   & 476.58   & 1353.61   & 96191.98  \\
          GCBM-E        & 198.93   & 388.93   & 577.29   & 1521.60   & 106338.83  \\
          \bottomrule
        \end{tabular}
        
      % \end{sc}
    \end{small}
  \end{center}
  % \vskip -0.1in
\end{table}
%=================================================

GCBM-E  has a longer training time than GCBM due to differences in the computational complexities of their graph and concept embedding models.
GCBM derives graph and concept vectors directly from frequency statistics without additional training overhead, incurring only $O(n \cdot M \cdot |\mathcal{V}^{(k)}|)$ complexity for concept label calculation, where $n$ is the number of graphs, $M$ is the number of selected concepts, and $|\mathcal{V}^{(k)}|$ is the size of the graph concept vocabulary.
In contrast, GCBM-E  requires training both GIN as the backbone and a Transformer as the embedding model, leading to $O(n \cdot M \cdot d)$ complexity for concept label calculation plus extra overhead from Transformer training: $O(T \cdot n \cdot (L \cdot S^2 \cdot d + L \cdot S \cdot d^2))$ and $O(T \cdot M \cdot (L \cdot S^2 \cdot d + L \cdot S \cdot d^2))$. Here, $T$ is the Transformer’s training epochs, $L$ is its number of network layers, $S$ is the length of the ``graph sentences'', and $d$ is the hidden layer dimension. $S^2$ and $d^2$ terms are the key drivers of GCBM-E ’s higher complexity.
Other components share roughly the same complexity for both models: WL-subtree extraction is $O(n \cdot K \cdot |\mathcal{V}|)$, where $K$ is WL-subtrees height and $|\mathcal{V}|$ is the number of nodes, information gain computation is approximately $O(n \cdot |\mathcal{V}^{(k)}|)$, and the forward computation complexity of the GIN backbone is $O(n \cdot K \cdot |\mathcal{V}|)$.

Overall, while GCBM exhibits slightly lower classification accuracy than GCBM-E , it offers improved training efficiency, serving as a more practical solution for fast graph classification tasks. For GCBM-E , despite the increased complexity introduced by the Transformer, it achieves enhanced accuracy via a higher-quality concept bottleneck.

\section{Theoretical Superiority of GCBMs' AUC Score in Class-Imbalanced Graph Classification}
\label{Appendix_auc}

\subsection{Notations and Premises}
\label{subsec:notations_premises}

\subsubsection{Dataset Definition}
Consider a class-imbalanced binary classification dataset $\mathcal{D} = \{(G_i, \mathbf{c}_i, \mathbf{y}_i)\}_{i=1}^n$, where $\mathbf{h}_i \in \mathbb{R}^d$ and $\mathbf{c}_i \in \mathbb{R}^M$ are the feature vector and concept label of an input instance, respectively. $\mathbf{y}_i \in \{0, 1\}$ denotes the class label.
For the class-imbalanced dataset, $\mathbf{y}_i=0$ and $\mathbf{y}_i=1$ indicate the majority and minority classes, respectively. The prior distribution satisfies:
\begin{equation}
p_1=p(\mathbf{y}_i=1) \ll p(\mathbf{y}_i=0)=p_0,
\end{equation}
where $p_0+p_1=1$ and $p_1\ll 0.5$.

\subsubsection{Assumptions}
\begin{assumption}[Concept Discriminability Assumption]
\label{ass:concept_disc}
The concept set $\mathcal{V}=\{T_1,\ldots,T_M\}$ is highly discriminative. For any concept $T_m \in \mathcal{V}$, there exists a discriminability threshold $\tau > 0$ such that:
\begin{equation}
\text{KL}(p(\mathbf{c}_i=T_m|\mathbf{y}_i=1) \parallel p(\mathbf{c}_i=T_m|\mathbf{y}_i=0)) > \tau,
\end{equation}
where $\text{KL}(\cdot \parallel \cdot)$ denotes the Kullback-Leibler (KL) divergence between two distributions. This implies that the distribution of the same concept differs significantly between the two classes, enabling distinction between the majority and minority classes.
\end{assumption}

\begin{assumption}[Model Prediction Score Distribution Assumption]
\label{ass:score_dist}
The prediction scores of classification models are approximately normally distributed, with no systematic variance differences between the two classes. For simplicity, we assume:
\begin{equation}
\sigma_1^2 \approx \sigma_0^2 = \sigma^2,
\end{equation}
where $\sigma_1^2$ and $\sigma_0^2$ are the variances of prediction scores for minority and majority classes, respectively. This assumption does not affect the validity of core conclusions. For a valid model, the means of the prediction scores for the two classes should satisfy:
\begin{equation}
\mu_1 > \mu_0,
\end{equation}
and $\delta = \mu_1 - \mu_0 > 0$ denotes the difference between the two means.
\end{assumption}

\begin{assumption}[Prediction Independence Assumption]
\label{ass:pred_indep}
Since the prediction process of models for different samples is mutually independent, the distributions of prediction scores for minority and majority class samples satisfy:
\begin{equation}
p(\hat{\mathbf{y}}|\mathbf{y}_i=1) \perp\!\!\!\perp p(\hat{\mathbf{y}}|\mathbf{y}_i=0),
\end{equation}
which means that the prediction score distributions of minority and majority classes are mutually independent.
\end{assumption}

\subsection{Theoretical Proof}
\label{subsec:discriminability_analysis}

\subsubsection{Discriminability Analysis of GCBMs}
A GCBM consists of a backbone and a classifier. The backbone $\Phi: \mathbb{R}^d \to \mathbb{R}^M$ outputs the predicted concepts $\hat{\mathbf{c}} = \Phi(\mathbf{h})$, and the classifier $f: \mathbb{R}^M \to \{0,1\}$ outputs the prediction class $\hat{\mathbf{y}}_c = f(\mathbf{\hat{c}})$.
The model's learning objective is to minimize the weighted risk of classification loss and concept loss:
\begin{equation}
\mathcal{R}_c = {\lambda}_c \mathbb{E}_{\mathbf{h},\mathbf{c}}[\mathcal{L}_{c}(\mathbf{c}, \Phi(\mathbf{h}))] + \mathbb{E}_{\mathbf{h},\mathbf{y}}[\mathcal{L}_{y}(\mathbf{y}, f(\Phi(\mathbf{h})))],
\end{equation}
where $\mathcal{L}_c(\cdot, \cdot)$ is the mean squared error (MSE) loss for concept alignment, $\mathcal{L}_y(\cdot, \cdot)$ is the cross-entropy (CE) loss for classification, and ${\lambda}_c > 0$ is a trade-off parameter that ensures alignment between predicted and true concepts.

A well-trained GCBM satisfies two key constraints, formalized as the following lemmas:
\begin{lemma}[Strong Concept Alignment Constraint]
\label{lem:concept_align}
After optimization with the concept loss $\mathcal{L}_{c}(\mathbf{c}, \Phi(\mathbf{h}))$, the mutual KL divergence between predicted concepts $\hat{\mathbf{c}}$ and concept labels $\mathbf{c}$ is bounded:
\begin{equation}
\begin{split}
\text{KL}(p(\mathbf{c}=T_m|\mathbf{y}=k) \parallel p(\hat{\mathbf{c}}=T_m|\mathbf{y}=k)) &\leq \epsilon, \\
\text{KL}(p(\hat{\mathbf{c}}=T_m|\mathbf{y}=k) \parallel p(\mathbf{c}=T_m|\mathbf{y}=k)) &\leq \epsilon,
\end{split}
\end{equation}
where $T_m \in \mathcal{V}$ and $k \in \{0,1\}$. For simplicity, $T_m$ is omitted in subsequent derivations.
$\epsilon \ll \tau$ is a small alignment error, ensuring the predicted concepts $\hat{\mathbf{c}}$ are highly consistent with the concept labels $\mathbf{c}$.
\end{lemma}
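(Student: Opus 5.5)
This is less a theorem than a modelling consequence of training with the concept loss, so the plan is to make explicit which quantitative hypothesis on training delivers the bound. I would argue from the concept loss to distributional closeness, and then from distributional closeness to KL control. Concretely, I would assume that after training the class-conditional concept risk is small: for each $k\in\{0,1\}$,
\[
\mathbb{E}\bigl[\|\Phi(\mathbf{h})-\mathbf{c}\|^2 \mid \mathbf{y}=k\bigr]\le \eta .
\]
This follows from the overall risk bound because $\lambda_c$ weights $\mathcal{L}_c$. The conditioning costs a factor $1/p_k$, which matters for the minority class $p_1\ll 0.5$. I would absorb it into $\eta$, or use class-reweighting.

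\textbf{Step 1: from MSE to a transport bound.} The pair $(\mathbf{c},\hat{\mathbf{c}})$ is a coupling of the two class-conditional laws. Hence the $2$-Wasserstein distance satisfies $W_2\bigl(p(\mathbf{c}|\mathbf{y}=k),p(\hat{\mathbf{c}}|\mathbf{y}=k)\bigr)\le\sqrt{\eta}$, and the same bound holds coordinatewise for each concept score, i.e. each $T_m$.

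\textbf{Step 2: upgrade to KL.} Wasserstein closeness alone does not bound KL, so some regularity is needed. I would model the scores as in Assumption~\ref{ass:score_dist}: smooth densities on the bounded cosine range $[-1,1]$ (more simply $[0,1]$ for the one-hot case), bounded below by $\rho>0$ and Lipschitz with constant $L$. Under these assumptions a $W_1$ or $W_2$ bound gives a total-variation or $L^\infty$ bound on the densities, of order $\sqrt{L\sqrt{\eta}}$. The reverse Pinsker / $\chi^2$ inequality $\mathrm{KL}(P\|Q)\le\chi^2(P\|Q)\le \|p-q\|_2^2/\rho$ then bounds both directions of KL by $\epsilon=C(L,\rho)\,\eta^{1/2}$. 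This bound is symmetric, which yields both displayed inequalities. An alternative route is Gaussian modelling, where the KL between $\mathcal{N}(\mu,\sigma^2)$ and $\mathcal{N}(\hat\mu,\hat\sigma^2)$ is controlled directly by the moment differences, and those are bounded by $\sqrt{\eta}$.

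\textbf{Step 3: conclusion and main obstacle.} Choosing $\lambda_c$ large enough makes $\eta$, and hence $\epsilon$, small enough that $\epsilon\ll\tau$, where $\tau$ is the threshold from Assumption~\ref{ass:concept_disc}. The main obstacle is Step 2. KL is not controlled by MSE without density lower bounds: if the predicted law puts mass where the true law has none, KL is infinite. So the honest statement needs an added regularity assumption, either on the densities or on Gaussianity as above. I would state that assumption explicitly and present the lemma as holding under it.
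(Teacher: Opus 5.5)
The paper gives no proof of this lemma. It is introduced with ``a well-trained GCBM satisfies two key constraints, formalized as the following lemmas'' and then used directly, so it functions as an assumption. Your proposal therefore takes a genuinely different route: it derives the bound from an explicit small-concept-risk hypothesis, and you are right that the concept loss alone cannot control KL. Your preliminary reduction and Step 1 are sound. Conditioning on $\mathbf{y}=k$ costs $1/p_k$, so the minority-class error is the largest, in exactly the class the appendix relies on. The conditional law of $(\mathbf{c},\hat{\mathbf{c}})$ is a coupling, which gives $W_2\le\sqrt{\eta}$.

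\textbf{The real problem is Step 2: the regularity you add cannot hold for the objects in the lemma.} For both GCBM and GCBM-E, $\mathbf{c}$ is a deterministic function of the WL-subtree multisets of $G$, which range over a countable set. So $p(\mathbf{c}\mid\mathbf{y}=k)$ and all its coordinate marginals are purely atomic; in the one-hot case there is typically an atom at $0$ from graphs not containing $T_m$. There is no Lebesgue density to be Lipschitz or bounded below by $\rho$. Assumption~\ref{ass:score_dist} concerns the classifier's output scores, not concept scores, so it cannot be borrowed here.

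Worse, under your reading of $p(\mathbf{c}=T_m\mid\mathbf{y}=k)$ as the law of the $m$-th score, the statement is false as written. $\mathrm{KL}(p(\hat{\mathbf{c}}\mid k)\,\|\,p(\mathbf{c}\mid k))<\infty$ forces $\hat{\mathbf{c}}$ to take values only among the atoms of $\mathbf{c}$, and the reverse KL forces every atom of $\mathbf{c}$ to be an atom of $\hat{\mathbf{c}}$. Unless $\Phi$ hits those atoms exactly, both KLs are $+\infty$, however small $\eta$ is. So Step 2 really proves something about a smoothed surrogate, and it is cleaner to smooth explicitly. Set $\mathbf{c}_s=\mathbf{c}+s\mathbf{z}$ and $\hat{\mathbf{c}}_s=\hat{\mathbf{c}}+s\mathbf{z}'$ with independent standard Gaussians. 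Writing both laws as mixtures of $\mathcal{N}(\cdot,s^2I)$ over the coupling, joint convexity of KL gives
\[
\mathrm{KL}\bigl(p(\mathbf{c}_s\mid\mathbf{y}=k)\,\|\,p(\hat{\mathbf{c}}_s\mid\mathbf{y}=k)\bigr)\le\frac{\mathbb{E}\bigl[\|\mathbf{c}-\hat{\mathbf{c}}\|^2\mid\mathbf{y}=k\bigr]}{2s^2}\le\frac{\eta}{2s^2},
\]
and the same in the reverse direction. This holds in $\mathbb{R}^M$, needs no density, Lipschitz or support assumptions, and gives rate $\eta$ instead of $\sqrt{\eta}$.

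Alternatively, read $p(\mathbf{c}=T_m\mid\mathbf{y}=k)$ as the probability that $T_m$ is present; then the one-hot case needs no smoothing. There $c_m\ge 1/|\mathcal{V}_i|$ whenever $T_m$ occurs in $G_i$. With bounded graph sizes, thresholding $\hat{\mathbf{c}}$ and Markov's inequality bound the gap between true and predicted presence probabilities by $O(\eta)$. Both Bernoulli KLs are then $O(\eta^2)$, provided these probabilities stay away from $0$ and $1$.

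Two smaller points. First, in Step 3, enlarging $\lambda_c$ does not by itself make $\eta$ small. It only pushes the concept risk of the minimizer toward $\inf_{\Theta_1}\mathbb{E}\,\mathcal{L}_c$, the backbone's approximation error, and does nothing about the train--population gap. Small $\eta$ must therefore be assumed (realizability plus generalization). Second, inside Step 2, Lipschitz densities with $W_1\le w$ only give $\|p-q\|_\infty=O((L^2w)^{1/3})$; the order $\sqrt{Lw}$ holds for total variation, not for $L^\infty$. Your final $\epsilon=C(L,\rho)\sqrt{\eta}$ is nevertheless right. Take $F_P,F_Q$ as the CDFs; then $\|p-q\|_2^2=-\int(F_P-F_Q)(p-q)'\,dx\le 2L\,W_1\le 2L\sqrt{\eta}$, so $\epsilon\le 2L\sqrt{\eta}/\rho$.
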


\begin{lemma}[Discriminability Preservation Constraint]
\label{lem:disc_preserve}
The classifier $f:\mathbb{R}^M \to \{0,1\}$ (a fully connected layer with ReLU activation) maintains the discriminability order: a larger KL divergence between input concept distributions leads to a larger KL divergence between output prediction distributions.
Moreover, the discriminability attenuation is bounded: there exists a small constant $\zeta > 0$ (guaranteed by the model structure and training process) such that:
\begin{equation}
\text{KL}(p(\hat{\mathbf{c}}|\mathbf{y}=1) \parallel p(\hat{\mathbf{c}}|\mathbf{y}=0)) - \text{KL}(p(\hat{\mathbf{y}}_c|\mathbf{y}=1) \parallel p(\hat{\mathbf{y}}_c|\mathbf{y}=0)) \leq \zeta,
\end{equation}
where $\zeta \ll \tau$ denotes the maximum allowable discriminability attenuation.
\end{lemma}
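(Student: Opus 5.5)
The plan is to treat the classifier $f$ as a (possibly stochastic) Markov kernel from $\hat{\mathbf{c}}$ to $\hat{\mathbf{y}}_c$ and to split the statement into three parts: the sign of the attenuation, the order preservation, and the upper bound $\zeta$.

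\emph{Sign of the attenuation.} Apply the data processing inequality for KL divergence to the chain $\mathbf{y}\to\hat{\mathbf{c}}\to\hat{\mathbf{y}}_c$. Pushing both class-conditionals $p(\hat{\mathbf{c}}\mid\mathbf{y}=1)$ and $p(\hat{\mathbf{c}}\mid\mathbf{y}=0)$ through the same map $f$ gives
\[
\text{KL}(p(\hat{\mathbf{y}}_c\mid\mathbf{y}=1)\parallel p(\hat{\mathbf{y}}_c\mid\mathbf{y}=0)) \le \text{KL}(p(\hat{\mathbf{c}}\mid\mathbf{y}=1)\parallel p(\hat{\mathbf{c}}\mid\mathbf{y}=0)).
\]
So the left-hand side of the lemma is nonnegative. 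What remains is to show it is \emph{small}, and that the output divergence is monotone in the input divergence.

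\emph{Order preservation.} Use Assumption~\ref{ass:score_dist}. Write the pre-activation score as $s=\mathbf{w}^\top\hat{\mathbf{c}}+b$. Under the Gaussian, equal-variance model the output divergence is $\delta^2/(2\sigma^2)$, where $\delta=\mathbf{w}^\top(\mathbb{E}[\hat{\mathbf{c}}\mid\mathbf{y}=1]-\mathbb{E}[\hat{\mathbf{c}}\mid\mathbf{y}=0])$. For $\hat{\mathbf{c}}$ with a shared within-class covariance $\Sigma$, the input divergence is the Mahalanobis quantity $\tfrac12\Delta^\top\Sigma^{-1}\Delta$, with $\Delta$ the class-mean difference of $\hat{\mathbf{c}}$. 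I will argue that training with cross-entropy drives $\mathbf{w}$ toward the Fisher direction $\Sigma^{-1}\Delta$. In that direction $\delta^2/\sigma^2=\Delta^\top\Sigma^{-1}\Delta$ exactly, so the output divergence is an increasing function of the input divergence, which is the order-preservation claim. For the ReLU, note that it is monotone on the score. It therefore does not reverse the ordering, and it can only lose information on the region where both class scores are clipped. That region carries little mass when $\mu_1>\mu_0$ and the bias is chosen by training.

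\emph{The bound $\zeta$.} Write the gap between the two divergences in the chain rule form
\[
\text{KL}(p(\hat{\mathbf{c}}\mid 1)\parallel p(\hat{\mathbf{c}}\mid 0))-\text{KL}(p(\hat{\mathbf{y}}_c\mid 1)\parallel p(\hat{\mathbf{y}}_c\mid 0)) = \mathbb{E}_{\hat{\mathbf{y}}_c\mid 1}\,\text{KL}\big(p(\hat{\mathbf{c}}\mid\hat{\mathbf{y}}_c,1)\parallel p(\hat{\mathbf{c}}\mid\hat{\mathbf{y}}_c,0)\big).
\]
This residual vanishes exactly when $f(\hat{\mathbf{c}})$ is a sufficient statistic for $\mathbf{y}$. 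The key step is then to bound this residual by the excess cross-entropy risk of $f$ over the Bayes-optimal classifier on $\hat{\mathbf{c}}$. The excess CE risk equals $I(\mathbf{y};\hat{\mathbf{c}})-I(\mathbf{y};\hat{\mathbf{y}}_c)$ plus a calibration term. I will relate this mutual-information gap to the conditional-KL gap, using a factor that depends on $p_1$ (the imbalance), and define $\zeta$ as that excess risk divided by $\min(p_0,p_1)$.

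\emph{Connecting to $\tau$ and $\epsilon$.} Use the triangle-type behaviour of KL with Lemma~\ref{lem:concept_align}, so that the input divergence is within $O(\epsilon)$ of the concept-label divergence exceeding $\tau$ (Assumption~\ref{ass:concept_disc}). This makes $\zeta\ll\tau$ meaningful.

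I expect the main obstacle to be the last step: turning ``well trained'' into a quantitative sufficiency statement. The reduction from the mutual-information gap to the conditional-KL gap loses a factor of order $1/p_1$, which is large in the imbalanced regime. It also genuinely needs the Gaussian/linear surrogate of Assumption~\ref{ass:score_dist}. I would therefore state $\zeta$ explicitly as a function of the training excess risk rather than claim it is universally small.
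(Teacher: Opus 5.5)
The paper gives no derivation of this lemma. It is postulated as a property of a ``well-trained'' GCBM, with $\zeta$ ``guaranteed by the model structure and training process''. The only rigorous ingredient, the data processing inequality in your first step, is invoked separately right after the lemma.

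Your chain-rule identity, which writes the attenuation as $\mathbb{E}_{\hat{\mathbf{y}}_c\mid 1}\,\text{KL}(p(\hat{\mathbf{c}}\mid\hat{\mathbf{y}}_c,1)\parallel p(\hat{\mathbf{c}}\mid\hat{\mathbf{y}}_c,0))$, is exact and shows exactly what is being assumed. The step that fails is bounding this residual by the excess cross-entropy risk divided by $\min(p_0,p_1)$. The mutual-information gap is at most $H(\mathbf{y})\le\log 2$, whereas the class-conditional KL is unbounded, so no prior-dependent factor can turn one into the other. Concretely, take $\hat{\mathbf{c}}\mid\mathbf{y}=k\sim\mathcal{N}(\mu_k,\Sigma)$ with Mahalanobis separation $D$, and threshold the Fisher score at the Bayes point. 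Use equal priors for simplicity; imbalance only shifts the threshold by $O(1/D)$ standard deviations.
\begin{itemize}
\item The input divergence is $D^2/2$.
\item The output is Bernoulli with parameter $a=\Phi(D/2)$ versus $1-a$, where $\Phi$ is the standard normal CDF.
\item The output divergence is $(2a-1)\log\frac{a}{1-a}\approx D^2/8$.
\end{itemize}
So the residual is about $3D^2/8\to\infty$, while the mutual-information gap is at most $H(\mathbf{y}\mid\hat{\mathbf{y}}_c)\to 0$.

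More fundamentally, the residual does not shrink with training when the output is a single bit. A statistic $f(\hat{\mathbf{c}})\in\{0,1\}$ is sufficient only if the likelihood ratio takes at most two values, which never happens for continuous class-conditionals. In the same Gaussian model, to leading order in small $D$, every binary $f$ (even a randomized one) has output divergence $\approx (a-b)^2/(2b(1-b))\le\frac{2}{\pi}\cdot\frac{D^2}{2}$. This holds because, for a given $b$, the optimal set is a threshold on the Fisher score, and $\sup_t\varphi(t)^2/(\Phi(t)(1-\Phi(t)))=2/\pi$, with $\varphi$ the standard normal density. Hence the attenuation is at least $(1-2/\pi)\approx 0.36$ times the input divergence for every classifier, the Bayes-optimal one included. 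So $\zeta\ll\tau$ is false in that regime, not merely unproved.

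Your order-preservation paragraph avoids this only by silently replacing the label $\hat{\mathbf{y}}_c$ with the continuous score $s$. For $s$ in the Fisher direction, your own computation gives zero attenuation. The only remaining loss is then the ReLU clipping term $P(s\le 0\mid\mathbf{y}=1)\,\text{KL}(p(\hat{\mathbf{c}}\mid s\le 0,1)\parallel p(\hat{\mathbf{c}}\mid s\le 0,0))$, which you assert is small without argument.

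The lemma can therefore only be kept as an assumption, with $\zeta$ defined to be your residual. Alternatively it can be made provable by taking the output to be a sufficient continuous score, but that contradicts $f:\mathbb{R}^M\to\{0,1\}$. Your last step also relies on a triangle inequality that KL does not satisfy.
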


For the conditional distributions of the predicted concept $\hat{\mathbf{c}}$ across the two classes, we decompose the logarithmic ratio term into a concept label term and two alignment error terms as follows:
\begin{equation}
\begin{split}
\log \frac{p(\hat{\mathbf{c}}|\mathbf{y}=1)}{p(\hat{\mathbf{c}}|\mathbf{y}=0)} = \log \frac{p(\mathbf{c}|\mathbf{y}=1)}{p(\mathbf{c}|\mathbf{y}=0)} - \log \frac{p(\mathbf{c}|\mathbf{y}=1)}{p(\hat{\mathbf{c}}|\mathbf{y}=1)} 
 - \log \frac{p(\hat{\mathbf{c}}|\mathbf{y}=0)}{p(\mathbf{c}|\mathbf{y}=0)}.
\end{split}
\end{equation}
We then take the expectation of both sides with respect to $\mathbf{c} \sim p(\mathbf{c}|\mathbf{y}=1)$:
\begin{equation}
\label{eq_22}
\begin{split}
\mathbb{E}_{\mathbf{c}|\mathbf{y}=1}\left[\log \frac{p(\hat{\mathbf{c}}|\mathbf{y}=1)}{p(\hat{\mathbf{c}}|\mathbf{y}=0)}\right] = \mathbb{E}_{\mathbf{c}|\mathbf{y}=1}\left[\log \frac{p(\mathbf{c}|\mathbf{y}=1)}{p(\mathbf{c}|\mathbf{y}=0)}\right]- \mathbb{E}_{\mathbf{c}|\mathbf{y}=1}\left[\log \frac{p(\mathbf{c}|\mathbf{y}=1)}{p(\hat{\mathbf{c}}|\mathbf{y}=1)}\right] - \mathbb{E}_{\mathbf{c}|\mathbf{y}=1}\left[\log \frac{p(\hat{\mathbf{c}}|\mathbf{y}=0)}{p(\mathbf{c}|\mathbf{y}=0)}\right].
\end{split}
\end{equation}

By \cref{lem:concept_align}, the predicted concept $\hat{\mathbf{c}}$ is highly consistent with the concept label $\mathbf{c}$, i.e., $\hat{\mathbf{c}} \approx \mathbf{c}$, leading to $\log \frac{p(\hat{\mathbf{c}}|\mathbf{y}=0)}{p(\mathbf{c}|\mathbf{y}=0)} \approx \log \frac{p(\hat{\mathbf{c}}|\mathbf{y}=1)}{p(\mathbf{c}|\mathbf{y}=1)} \approx 0$. Based on this consistency and the definition of KL divergence, we simplify sequentially:
\begin{equation}
\label{eq_23}
\begin{split}
&\mathbb{E}_{\mathbf{c}|\mathbf{y}=1}\left[\log \frac{p(\hat{\mathbf{c}}|\mathbf{y}=1)}{p(\hat{\mathbf{c}}|\mathbf{y}=0)}\right] \approx \mathbb{E}_{\hat{\mathbf{c}}|\mathbf{y}=1}\left[\log \frac{p(\hat{\mathbf{c}}|\mathbf{y}=1)}{p(\hat{\mathbf{c}}|\mathbf{y}=0)}\right] \\
&\quad = \text{KL}(p(\hat{\mathbf{c}}|\mathbf{y}=1) \parallel p(\hat{\mathbf{c}}|\mathbf{y}=0)).
\end{split}
\end{equation}

\begin{equation}
\label{eq_24}
\begin{split}
&\mathbb{E}_{\mathbf{c}|\mathbf{y}=1}\left[\log \frac{p(\hat{\mathbf{c}}|\mathbf{y}=0)}{p(\mathbf{c}|\mathbf{y}=0)}\right] \approx \mathbb{E}_{\mathbf{c}|\mathbf{y}=1}\left[\log \frac{p(\hat{\mathbf{c}}|\mathbf{y}=1)}{p(\mathbf{c}|\mathbf{y}=1)}\right] \\
&\quad \approx \mathbb{E}_{\hat{\mathbf{c}}|\mathbf{y}=1}\left[\log \frac{p(\hat{\mathbf{c}}|\mathbf{y}=1)}{p(\mathbf{c}|\mathbf{y}=1)}\right] = \text{KL}(p(\hat{\mathbf{c}}|\mathbf{y}=1) \parallel p(\mathbf{c}|\mathbf{y}=1)).
\end{split}
\end{equation}

Combining \cref{eq_22,eq_23,eq_24}, we obtain:
\begin{equation}
\begin{split}
&\text{KL}(p(\hat{\mathbf{c}}|\mathbf{y}=1) \parallel p(\hat{\mathbf{c}}|\mathbf{y}=0)) \approx \underbrace{\text{KL}(p(\mathbf{c}|\mathbf{y}=1) \parallel p(\mathbf{c}|\mathbf{y}=0))}_{\text{KL}_c} \\
&\quad - \underbrace{\text{KL}(p(\mathbf{c}|\mathbf{y}=1) \parallel p(\hat{\mathbf{c}}|\mathbf{y}=1))}_{\text{KL}_{err,1}} - \underbrace{\text{KL}(p(\hat{\mathbf{c}}|\mathbf{y}=1) \parallel p(\mathbf{c}|\mathbf{y}=1))}_{\text{KL}_{err,2}}.
\end{split}
\end{equation}

Given that $\text{KL}_{err,1}, \text{KL}_{err,2} \leq \epsilon$ and $\text{KL}_c > \tau$ (\cref{ass:concept_disc}), we derive:
\begin{equation}
\text{KL}(p(\hat{\mathbf{c}}|\mathbf{y}=1) \parallel p(\hat{\mathbf{c}}|\mathbf{y}=0)) \geq \tau - 2\epsilon.
\end{equation}

Let $\tau_1 = \tau - 2\epsilon$, which gives:
\begin{equation}
\text{KL}(p(\hat{\mathbf{c}}|\mathbf{y}=1) \parallel p(\hat{\mathbf{c}}|\mathbf{y}=0)) \geq \tau_1.
\end{equation}

This result shows the predicted concepts inherit the strong discriminability of concept labels. The data processing inequality (DPI) for KL divergence applies to the Markov chain $\mathbf{h} \to \hat{\mathbf{c}} \to \hat{\mathbf{y}}$ of GCBM's inference pipeline, leading to:
\begin{equation}
\text{KL}(p(\hat{\mathbf{y}}_c|\mathbf{y}=1) \parallel p(\hat{\mathbf{y}}_c|\mathbf{y}=0)) \leq \text{KL}(p(\hat{\mathbf{c}}|\mathbf{y}=1) \parallel p(\hat{\mathbf{c}}|\mathbf{y}=0)).
\end{equation}

Combined with \cref{lem:disc_preserve}, the discriminability attenuation of the classifier is bounded, so we derive:
\begin{equation}
\begin{split}
&\text{KL}(p(\hat{\mathbf{y}}_c|\mathbf{y}=1) \parallel p(\hat{\mathbf{y}}_c|\mathbf{y}=0)) \geq \text{KL}(p(\hat{\mathbf{c}}|\mathbf{y}=1) \parallel p(\hat{\mathbf{c}}|\mathbf{y}=0)) - \zeta.
\end{split}
\end{equation}

Let $\tau' = \tau - 2\epsilon - \zeta$. For a well-trained GCBM, $\epsilon, \zeta \ll \tau$ (thus negligible), yielding a positive discriminability threshold $\tau' > 0$. This leads to the following proposition:
\begin{proposition}[GCBM Discriminability Lower Bound]
\label{prop:gcbm_disc}
The KL divergence of GCBM's prediction score distributions between two classes satisfies a strict positive lower bound:
\begin{equation}
\text{KL}_c = \text{KL}(p(\hat{\mathbf{y}}_c|\mathbf{y}=1) \parallel p(\hat{\mathbf{y}}_c|\mathbf{y}=0)) \geq \tau'.
\end{equation}
\end{proposition}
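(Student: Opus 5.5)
The argument chains three ingredients already set up before the statement: the concept-level discriminability of \cref{ass:concept_disc}, the alignment bound of \cref{lem:concept_align}, and the bounded attenuation of \cref{lem:disc_preserve}. We proceed in two stages. First we transfer the concept-label discriminability to the predicted concepts $\hat{\mathbf{c}}$. Then we transfer it from $\hat{\mathbf{c}}$ to the prediction scores $\hat{\mathbf{y}}_c$.

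\emph{Stage 1.} Write the log-likelihood ratio of $\hat{\mathbf{c}}$ across the two classes as the ratio for $\mathbf{c}$ minus two correction terms, as in the decomposition preceding \cref{eq_22}. Take expectations under $p(\cdot|\mathbf{y}=1)$. The first term is exactly $\text{KL}(p(\mathbf{c}|\mathbf{y}=1)\parallel p(\mathbf{c}|\mathbf{y}=0))>\tau$ by \cref{ass:concept_disc}, applied to any fixed concept $T_m$. Each correction term is identified with one of the two divergences $\text{KL}(p(\mathbf{c}|\mathbf{y}=k)\parallel p(\hat{\mathbf{c}}|\mathbf{y}=k))$ or its reverse, and each is at most $\epsilon$ by \cref{lem:concept_align}. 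This yields $\text{KL}(p(\hat{\mathbf{c}}|\mathbf{y}=1)\parallel p(\hat{\mathbf{c}}|\mathbf{y}=0))\ge \tau-2\epsilon=\tau_1$.

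\emph{Stage 2.} Because $\mathbf{h}\to\hat{\mathbf{c}}\to\hat{\mathbf{y}}_c$ is a Markov chain, the data processing inequality only gives an upper bound on the output divergence. The lower bound must come from \cref{lem:disc_preserve}, which states that the output divergence is at least $\text{KL}(p(\hat{\mathbf{c}}|\mathbf{y}=1)\parallel p(\hat{\mathbf{c}}|\mathbf{y}=0))-\zeta$. Combining this with Stage 1 gives
\[
\text{KL}(p(\hat{\mathbf{y}}_c|\mathbf{y}=1)\parallel p(\hat{\mathbf{y}}_c|\mathbf{y}=0))\ge \tau-2\epsilon-\zeta=\tau'.
\]
Finally, $\epsilon,\zeta\ll\tau$ ensures $\tau'>0$, so the lower bound is strictly positive.

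The main obstacle is Stage 1. The expectations of the correction terms are taken under the law of $\mathbf{c}$, while the KL divergences are defined with respect to $\hat{\mathbf{c}}$, and the class-$0$ correction term is evaluated under the class-$1$ measure. Closing these gaps requires the approximation $\hat{\mathbf{c}}\approx\mathbf{c}$ from \cref{lem:concept_align}. Stated rigorously, I would try to bound each mismatch by an extra $O(\epsilon)$ term, for instance through Pinsker's inequality together with bounded log-ratios. That term can be absorbed into a slightly larger constant multiplying $\epsilon$, which leaves $\tau'>0$ unchanged because $\epsilon\ll\tau$. If this bookkeeping proves too delicate, the fallback is to adopt the paper's first-order approximations \cref{eq_23,eq_24} as stated, which gives the bound exactly as $\tau-2\epsilon-\zeta$.
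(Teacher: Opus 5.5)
Your outline follows the paper's route exactly. Stage~1 uses the same log-ratio decomposition, and you are right that the lower bound in Stage~2 can only come from Lemma~\ref{lem:disc_preserve}, not from the DPI. Your fallback is literally what the paper does. However, the step you flag in Stage~1 is a genuine gap, not bookkeeping.

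Write $P_k=p(\mathbf{c}|\mathbf{y}=k)$ and $Q_k=p(\hat{\mathbf{c}}|\mathbf{y}=k)$. Averaging the decomposition under $P_1$ gives exactly
\[
\mathbb{E}_{P_1}\bigl[\log(Q_1/Q_0)\bigr]=\text{KL}(P_1\parallel P_0)-\text{KL}(P_1\parallel Q_1)-\mathbb{E}_{P_1}\bigl[\log(Q_0/P_0)\bigr].
\]
The middle term is indeed at most $\epsilon$. The last term, however, is not among the divergences of Lemma~\ref{lem:concept_align}, nor a divergence at all: it averages the class-$0$ alignment log-ratio under the class-$1$ law. Lemma~\ref{lem:concept_align} controls $\log(Q_0/P_0)$ only on average under $P_0$ and $Q_0$. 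Passing to $P_1$ is a change of measure between the two class-conditional laws, which Assumption~\ref{ass:concept_disc} forces to be far apart. Pinsker cannot bridge that: it only relates $P_k$ to $Q_k$, and only at order $\sqrt{\epsilon}$. A mere bound $B$ on the log-ratio bounds this term by $B$, not by $O(\epsilon)$. In fact $\log(Q_0/P_0)$ can be huge exactly where $P_1$ has mass and $P_0$ has almost none, which is precisely where a large $\text{KL}(P_1\parallel P_0)$ comes from.

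Here is a concrete counterexample with a single binary concept. Take $Q_0=Q_1=\mathrm{Bern}(r)$, $P_1=\mathrm{Bern}(s)$, $P_0=\mathrm{Bern}(\eta)$, with $s=10^{-3}$, $r=se^{-10}$, $\eta=re^{-2\cdot 10^{5}}$. Then:
\begin{itemize}
\item $\text{KL}(P_1\parallel Q_1)\approx 0.009$, $\text{KL}(Q_1\parallel P_1)\approx 0.001$, $\text{KL}(Q_0\parallel P_0)\approx 0.009$ and $\text{KL}(P_0\parallel Q_0)\approx 4.5\cdot 10^{-8}$, so Lemma~\ref{lem:concept_align} holds with $\epsilon=0.01$.
\item $\text{KL}(P_1\parallel P_0)\approx 200$, so Assumption~\ref{ass:concept_disc} holds with $\tau=199$.
\item The uncontrolled term equals about $s\log(r/\eta)=200$.
\end{itemize}
Yet $\text{KL}(Q_1\parallel Q_0)=0$. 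Hence $\hat{\mathbf{y}}_c$ has the same law in both classes for every classifier, Lemma~\ref{lem:disc_preserve} holds trivially, and $\text{KL}_c=0<\tau'\approx 199-\zeta$. So the proposition does not follow from Assumption~\ref{ass:concept_disc}, Lemma~\ref{lem:concept_align} and Lemma~\ref{lem:disc_preserve}.

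The paper gets past this point only through the first ``$\approx$'' in Eq.~(\ref{eq_24}), which swaps class $0$ for class $1$ inside a class-$1$ expectation; in this example that swap is off by about $200$. So your fallback does not yield a proof either. To make Stage~1 rigorous you need a strictly stronger alignment hypothesis, e.g.\ a uniform bound $|\log(Q_k/P_k)|\le\beta$ with $\beta$ small. That gives a positive bound of the form $(1-O(\beta))\tau-O(\beta)$, not $\tau-2\epsilon$. Otherwise you must assume the Stage~1 inequality outright.
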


\subsubsection{Discriminability of Black-Box Models}
The black-box model $h: \mathbb{R}^d \to \{0,1\}$ outputs the prediction class $\hat{\mathbf{y}}_b = g(\mathbf{h})$, with the learning objective of minimizing the overall expected risk:
\begin{equation}
\mathcal{R}_b = \mathbb{E}_{\mathbf{h},\mathbf{y}}[\mathcal{L}(\mathbf{y}, g(\mathbf{h}))],
\end{equation}
where $\mathcal{L}(\cdot, \cdot)$ is the cross-entropy loss.

\begin{proposition}[Black-Box Model Discriminability Degradation]
\label{prop:bb_disc_degrad}
Given the prior distribution $p_1 \ll p_0$ (\cref{Appendix_auc}), the black-box model's prediction score distributions for the two classes are nearly indistinguishable, satisfying:
\begin{equation}
p(g(\mathbf{h})|\mathbf{y}=0) \approx p(g(\mathbf{h})|\mathbf{y}=1).
\end{equation}
\end{proposition}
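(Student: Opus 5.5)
The argument goes through the minimizer of the black-box risk $\mathcal{R}_b$ under the prior $p_1 \ll p_0$. Since $\mathcal{L}$ is cross-entropy, the population minimizer is the Bayes posterior, $g^*(\mathbf{h}) = p(\mathbf{y}=1\mid\mathbf{h}) = \frac{p_1 p(\mathbf{h}\mid\mathbf{y}=1)}{p_1 p(\mathbf{h}\mid\mathbf{y}=1) + p_0 p(\mathbf{h}\mid\mathbf{y}=0)}$. A black-box $g$ has no concept supervision, so it is not forced to use a representation that inherits the gap $\tau$ of Assumption~\ref{ass:concept_disc}. I would therefore model the raw features $\mathbf{h}$ as only weakly class-separated: the likelihood ratio $r(\mathbf{h}) = p(\mathbf{h}\mid\mathbf{y}=1)/p(\mathbf{h}\mid\mathbf{y}=0)$ stays bounded, $r(\mathbf{h}) \le R$, on the bulk of the mass. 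This is the regime the proposition implicitly targets, in contrast with the concept space.

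The first step uses this bound to write $g^*(\mathbf{h}) = \frac{p_1}{p_0}\, r(\mathbf{h}) + O(p_1^2 R^2)$. Consequently, under both class-conditional laws, $g(\mathbf{h})$ concentrates in an interval of width $O(p_1 R/p_0)$ around $0$. The same holds for a finite-capacity trained $g$ that approximates $g^*$, because the risk $\mathcal{R}_b$ is dominated by the majority-class term $p_0\,\mathbb{E}[-\log(1-g)\mid \mathbf{y}=0]$. The gradient pressure toward pushing minority scores upward is only of order $p_1$.

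The second step turns this concentration into the claimed near-equality of the score distributions. Under Assumption~\ref{ass:score_dist}, the scores are Gaussian with a common variance $\sigma^2$, so the relevant divergence is $\text{KL}(p(g\mid\mathbf{y}=1)\,\|\,p(g\mid\mathbf{y}=0)) = \delta^2/(2\sigma^2)$. I would bound the mean gap by $\delta = \mathbb{E}[g\mid 1]-\mathbb{E}[g\mid 0] \le \frac{p_1}{p_0}\,\mathbb{E}[r\mid 1] + O(p_1^2) = O(p_1)$. Keeping $\sigma$ bounded below, for instance by training noise or label noise, the KL divergence is then $O(p_1^2/\sigma^2)$, which vanishes as $p_1 \to 0$. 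Pinsker's inequality converts this into a total-variation statement, which is the intended meaning of $p(g(\mathbf{h})\mid\mathbf{y}=0) \approx p(g(\mathbf{h})\mid\mathbf{y}=1)$. I would also set this against Proposition~\ref{prop:gcbm_disc}: the GCBM keeps a gap $\tau'$ that does not depend on $p_1$, because the concept loss $\lambda_c\mathcal{L}_c$ is class-balanced per sample and anchors $\hat{\mathbf{c}}$ to $\mathbf{c}$.

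The main obstacle is justifying the lower bound on $\sigma$ together with the bounded likelihood ratio. Without them, a Bayes-optimal $g$ could separate the classes perfectly even though its scores are all tiny, and the claim would fail. My first attempt would be to state these explicitly as the regime of the proposition, namely weakly informative raw features and score variance driven by optimization noise. I would phrase the result as $\text{KL}\to 0$ as $p_1/p_0 \to 0$, rather than as an exact equality.
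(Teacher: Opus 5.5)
There is a genuine gap. You correctly locate the crux in the floor on $\sigma$, but this is not merely a missing justification. The floor contradicts your own first step, and without it your argument says nothing about distinguishability.

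If $g\approx g^*\approx\frac{p_1}{p_0}\,r(\mathbf h)$, the class-conditional standard deviation of the score shrinks by the same factor $p_1/p_0$ as the mean gap; a score confined to a window of width $O(p_1)$ has variance $O(p_1^2)$. So under Assumption~\ref{ass:score_dist} the ratio $\delta/\sigma\approx(\mathbb{E}[r\mid\mathbf y=1]-\mathbb{E}[r\mid\mathbf y=0])/\sigma_r$, with $\sigma_r$ the class-conditional standard deviation of $r(\mathbf h)$, does not depend on $p_1$. Neither does $\delta^2/(2\sigma^2)$.

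This is not an artifact of the Gaussian model. KL, total variation and AUC are invariant under strictly increasing transformations of the score. The Bayes score $g^*$ is a strictly increasing function of the likelihood ratio, which is sufficient for the pair $p(\mathbf h\mid\mathbf y=1),p(\mathbf h\mid\mathbf y=0)$. Hence $\mathrm{KL}(p(g^*\mid\mathbf y=1)\,\|\,p(g^*\mid\mathbf y=0))=\mathrm{KL}(p(\mathbf h\mid\mathbf y=1)\,\|\,p(\mathbf h\mid\mathbf y=0))$ for every $p_1\in(0,1)$. The prior only rescales a calibrated score toward $0$.

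The $p_1$-dependence of your bound $O(p_1^2/\sigma^2)$ therefore comes entirely from the imposed floor. Label noise does not produce one for a fixed $g$, since it replaces $g^*$ by an increasing affine function of itself; neither does run-to-run training randomness. The bounded likelihood ratio is irrelevant too, since it caps that KL at $\log R$ independently of $p_1$. What your first step genuinely gives is closeness in a scale-dependent metric, e.g.\ $W_1\bigl(p(g\mid\mathbf y=1),p(g\mid\mathbf y=0)\bigr)\le\mathbb{E}[g\mid\mathbf y=1]+\mathbb{E}[g\mid\mathbf y=0]=O(p_1)$. No inequality (Pinsker included) upgrades $W_1$-closeness to a total-variation or KL bound.

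The missing ingredient is a hypothesis of a different kind. A drop in KL or AUC can only come from the trained $g$ failing to reach $g^*$ in a way that destroys ranking information. One example is $g$ being approximately a function of features whose law is the same under both classes, which is what ``learning majority-class redundant features'' would have to mean. This must be assumed, or derived from finite-sample or optimization effects; it does not follow from $p_1\ll p_0$ and cross-entropy optimality.

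Your framework also undercuts the contrast you draw with Proposition~\ref{prop:gcbm_disc}. The GCBM head $f$ is trained with the same cross-entropy under the same prior, so its outputs are squashed in the same way. Since $\hat{\mathbf c}=\Phi(\mathbf h)$, the data processing inequality makes the Bayes-optimal black box at least as discriminative in KL as any model that factors through the bottleneck.

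The paper's route does not supply this ingredient either. It splits $\mathcal{R}_b$ by class, asserts $p(g)\approx p_0\,p(g\mid\mathbf y=0)$, substitutes back, and divides by $p_1$. But the identity $p(g)-p(g\mid\mathbf y=0)=p_1\bigl(p(g\mid\mathbf y=1)-p(g\mid\mathbf y=0)\bigr)$ is exact. So each approximation in that chain holds with error $O(p_1)$ for any pair of class-conditionals. The final division by $p_1$ inflates this to $O(1)$; that is, it presupposes the conclusion.
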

\begin{proof}
The expected risk can be decomposed into class-wise risks:
\begin{equation}
\begin{split}
\mathcal{R}_b &= p_0 \mathbb{E}_{\mathbf{h}|\mathbf{y}=0}[\mathcal{L}(\mathbf{y}, g(\mathbf{h}))] + p_1 \mathbb{E}_{\mathbf{h}|\mathbf{y}=1}[\mathcal{L}(\mathbf{y}, g(\mathbf{h}))].
\end{split}
\end{equation}
Since $p_1 \ll p_0$, the risk is dominated by the majority class, leading to the approximation:
\begin{equation}
\mathcal{R}_b \approx p_0 \mathbb{E}_{\mathbf{h}|\mathbf{y}=0}[\mathcal{L}(\mathbf{y}, g(\mathbf{h}))].
\end{equation}
To minimize risk, the model learns majority class redundant features instead of minority class discriminative features, so the marginal prediction score distribution satisfies:
\begin{equation}
\label{eq_33}
\begin{split}
p(g(\mathbf{h})) = p_0 p(g(\mathbf{h})|\mathbf{y}=0) + p_1 p(g(\mathbf{h})|\mathbf{y}=1) \approx p_0 p(g(\mathbf{h})|\mathbf{y}=0).
\end{split}
\end{equation}
Dividing both sides by $p(g(\mathbf{h})|\mathbf{y}=0) > 0$ gives $\frac{p(g(\mathbf{h}))}{p(g(\mathbf{h})|\mathbf{y}=0)} \approx p_0 \approx 1$, so $p(g(\mathbf{h})) \approx p(g(\mathbf{h})|\mathbf{y}=0)$. Substituting back into \cref{eq_33}:
\begin{equation}
\label{eq_35}
p(g(\mathbf{h})|\mathbf{y}=0) \approx p_0 p(g(\mathbf{h})|\mathbf{y}=0) + p_1 p(g(\mathbf{h})|\mathbf{y}=1).
\end{equation}
Subtracting $p_0 p(g(\mathbf{h})|\mathbf{y}=0)$ from both sides and using $1-p_0=p_1$:
\begin{equation}
\label{eq_36}
p_1 p(g(\mathbf{h})|\mathbf{y}=0) \approx p_1 p(g(\mathbf{h})|\mathbf{y}=1).
\end{equation}
Dividing both sides by $p_1 > 0$ yields the conclusion.
\end{proof}

Using KL divergence to measure distribution differences, the discriminability of the black-box model satisfies:
\begin{equation}
0 < \text{KL}_b = \text{KL}(p(\hat{\mathbf{y}}_b|\mathbf{y}=1) \parallel p(\hat{\mathbf{y}}_b|\mathbf{y}=0)) \ll \tau',
\end{equation}
where $\tau'$ is the GCBM discriminability threshold (\cref{prop:gcbm_disc}). Combining \cref{prop:gcbm_disc,prop:bb_disc_degrad}, we directly get $\text{KL}_c > \text{KL}_b$, meaning GCBM has a larger inter-class prediction score distribution divergence in class-imbalanced scenarios.

\subsubsection{Mean Difference Comparison Between GCBMs and Black-Box Models}
We first establish the relationship between KL divergence and the mean difference of normal distributions via the following lemma:
\begin{lemma}[KL Divergence for Normal Distributions with Equal Variance]
\label{lem:kl_normal}
If two random variables follow normal distributions $P \sim \mathcal{N}(\mu_1, \sigma^2)$ and $Q \sim \mathcal{N}(\mu_0, \sigma^2)$ (equal variance), the KL divergence between $P$ and $Q$ is:
\begin{equation}
\text{KL}(P \parallel Q) = \frac{(\mu_1 - \mu_0)^2}{2\sigma^2}.
\end{equation}
\end{lemma}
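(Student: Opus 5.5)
We compute the KL divergence directly from its definition, $\text{KL}(P\parallel Q)=\mathbb{E}_{x\sim P}\left[\log p(x)-\log q(x)\right]$, where $p$ and $q$ are the Gaussian densities
\begin{equation*}
p(x)=\frac{1}{\sqrt{2\pi}\sigma}\exp\!\left(-\frac{(x-\mu_1)^2}{2\sigma^2}\right),\qquad q(x)=\frac{1}{\sqrt{2\pi}\sigma}\exp\!\left(-\frac{(x-\mu_0)^2}{2\sigma^2}\right).
\end{equation*}
Because the variances are equal, the normalizing constants $1/(\sqrt{2\pi}\sigma)$ are identical. They therefore cancel in the log-ratio, and no $\log(\sigma_0/\sigma_1)$ term appears. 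This leaves
\begin{equation*}
\log\frac{p(x)}{q(x)}=\frac{(x-\mu_0)^2-(x-\mu_1)^2}{2\sigma^2}.
\end{equation*}

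The next step is to expand the numerator, which turns the log-ratio into a function that is affine in $x$:
\begin{equation*}
(x-\mu_0)^2-(x-\mu_1)^2=2x(\mu_1-\mu_0)+\mu_0^2-\mu_1^2=(\mu_1-\mu_0)\left(2x-\mu_1-\mu_0\right).
\end{equation*}
Taking the expectation under $P$, linearity only requires the first moment $\mathbb{E}_{P}[x]=\mu_1$. This gives
\begin{equation*}
\text{KL}(P\parallel Q)=\frac{(\mu_1-\mu_0)(2\mu_1-\mu_1-\mu_0)}{2\sigma^2}=\frac{(\mu_1-\mu_0)^2}{2\sigma^2},
\end{equation*}
which is the claimed identity.

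There is no real obstacle here. The only point to state explicitly is that the expectation is finite and linearity applies, which holds because a Gaussian has finite first moment. We will also remark that the result is symmetric in $\mu_0,\mu_1$, so $\text{KL}(Q\parallel P)$ equals the same value. This matters for the subsequent comparison of $\delta=\mu_1-\mu_0$ between GCBMs and black-box models under \cref{ass:score_dist}, because that comparison inverts the identity as $\delta=\sigma\sqrt{2\,\text{KL}}$. We can note it as a consequence of the lemma.
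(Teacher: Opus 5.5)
Your proposal is correct and follows the paper's proof essentially line by line. Both substitute the Gaussian densities, cancel the identical normalizing constants, expand the log-ratio to the affine form $\frac{2x(\mu_1-\mu_0)+\mu_0^2-\mu_1^2}{2\sigma^2}$, and take the expectation under $P$ using $\mathbb{E}_P[x]=\mu_1$; your factorization $(\mu_1-\mu_0)(2x-\mu_1-\mu_0)$ and your remark that only the first moment is needed are minor tidy-ups (the paper also cites $\mathbb{E}_P[(x-\mu_1)^2]=\sigma^2$, which is superfluous), not a different route.
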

\begin{proof}
By the definition of KL divergence $\text{KL}(P \parallel Q) = \mathbb{E}_P\left[\log \frac{p(x)}{q(x)}\right]$, substitute the normal distribution probability density function:
$$p(x) = \frac{1}{\sqrt{2\pi}\sigma}\exp\left(-\frac{(x-\mu_1)^2}{2\sigma^2}\right), \quad q(x) = \frac{1}{\sqrt{2\pi}\sigma}\exp\left(-\frac{(x-\mu_0)^2}{2\sigma^2}\right).$$
Taking the logarithm and simplifying:
$$\log \frac{p(x)}{q(x)} = \frac{(x-\mu_0)^2 - (x-\mu_1)^2}{2\sigma^2} = \frac{2x(\mu_1-\mu_0) + \mu_0^2 - \mu_1^2}{2\sigma^2}.$$
Taking the expectation with respect to $P$ and using $\mathbb{E}_P[x]=\mu_1$, $\mathbb{E}_P[(x-\mu_1)^2]=\sigma^2$:
$$\mathbb{E}_P\left[\log \frac{p(x)}{q(x)}\right] = \frac{2\mu_1(\mu_1-\mu_0) + \mu_0^2 - \mu_1^2}{2\sigma^2} = \frac{(\mu_1-\mu_0)^2}{2\sigma^2}.$$
\end{proof}

For GCBMs, let $\mu_{c1}, \mu_{c0}$ denote the mean prediction scores of minority and majority classes, with mean difference $\delta_c = \mu_{c1} - \mu_{c0}$. By \cref{lem:kl_normal} and \cref{ass:score_dist} ($\sigma_c^2 = \sigma^2$):
\begin{equation}
\text{KL}_c = \frac{\delta_c^2}{2\sigma^2}.
\end{equation}
Similarly, for black-box models with mean difference $\delta_b = \mu_{b1} - \mu_{b0}$:
\begin{equation}
\text{KL}_b = \frac{\delta_b^2}{2\sigma^2}.
\end{equation}

Combining $\text{KL}_c > \text{KL}_b$ (\cref{prop:gcbm_disc,prop:bb_disc_degrad}) and the positive sign of mean differences ($\delta_c, \delta_b > 0$ for valid models), we obtain the following proposition:
\begin{proposition}[GCBM Mean Difference Superiority]
\label{prop:gcbm_mean}
The mean difference of GCBM's prediction scores between two classes is strictly larger than that of black-box models:
\begin{equation}
\delta_c > \delta_b.
\end{equation}
\end{proposition}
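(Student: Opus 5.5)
The plan is to translate the KL comparison $\text{KL}_c > \text{KL}_b$ into a comparison of mean differences through \cref{lem:kl_normal}. By \cref{ass:score_dist}, the prediction scores of both models are normal with a common variance $\sigma^2$ across the two classes. \cref{lem:kl_normal} then gives the closed forms
\[
\text{KL}_c = \frac{\delta_c^2}{2\sigma^2}, \qquad \text{KL}_b = \frac{\delta_b^2}{2\sigma^2}.
\]
Here the same $\sigma^2$ is used for the GCBM and for the black-box model. This is the reading of \cref{ass:score_dist} that we adopt: the variance is a common constant rather than model-specific. If one prefers model-specific variances, the step below changes; we return to this at the end.

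Next I invoke the chain already established. \cref{prop:gcbm_disc} gives $\text{KL}_c \ge \tau' > 0$. \cref{prop:bb_disc_degrad} and the subsequent display give $0 < \text{KL}_b \ll \tau'$. Together these yield the strict inequality $\text{KL}_c > \text{KL}_b$. Substituting the closed forms and multiplying by $2\sigma^2 > 0$ gives $\delta_c^2 > \delta_b^2$.

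Finally, I take square roots. By \cref{ass:score_dist}, every valid model has $\mu_1 > \mu_0$, so $\delta_c > 0$ and $\delta_b > 0$. Since $x \mapsto x^2$ is strictly increasing on $(0,\infty)$, $\delta_c^2 > \delta_b^2$ implies $\delta_c > \delta_b$, which is the claim.

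The main obstacle is justifying the common variance across models. The argument needs $\sigma_c^2 = \sigma_b^2$: with $\text{KL} = \delta^2 / (2\sigma^2)$, a larger KL value could otherwise come from a smaller variance rather than a larger mean gap. I would handle this by stating explicitly that \cref{ass:score_dist} fixes one $\sigma^2$ for both classes and both models. Failing that, I would add the assumption $\sigma_c \ge \sigma_b$, under which $\delta_c^2 = 2\sigma_c^2 \text{KL}_c > 2\sigma_b^2 \text{KL}_b = \delta_b^2$ still follows. Everything else reduces to the cited propositions and a monotonicity argument.
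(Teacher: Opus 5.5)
Your proposal is correct and follows essentially the same route as the paper: the equal-variance closed form $\text{KL} = \delta^2/(2\sigma^2)$, then the strict comparison $\text{KL}_c \ge \tau' > \text{KL}_b$ from the two preceding propositions, and finally positivity of the mean gaps to pass from $\delta_c^2 > \delta_b^2$ to $\delta_c > \delta_b$. You are right to flag that one $\sigma^2$ must be shared by both models, since the paper uses a single $\sigma^2$ for $\text{KL}_c$ and $\text{KL}_b$ without comment; your fallback $\sigma_c \ge \sigma_b$ is a valid weakening, and only $\delta_c > 0$ is actually needed for the last step, since $\delta_c > |\delta_b| \ge \delta_b$.
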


\subsubsection{AUC Score Comparison}
AUC (area under the receiver operating characteristic curve) is a key evaluation metric for class-imbalanced binary classification, with the equivalent definition:
\begin{equation}
\text{AUC} = \mathbb{E}_{\mathbf{h}_1 \sim p(\mathbf{h}|\mathbf{y}=1), \mathbf{h}_0 \sim p(\mathbf{h}|\mathbf{y}=0)}[\mathbb{I}(\hat{\mathbf{y}}(\mathbf{h}_1) > \hat{\mathbf{y}}(\mathbf{h}_0))],
\end{equation}
where $\mathbb{I}(\cdot)$ is the indicator function, representing the probability that a random minority sample has a higher prediction score than a random majority sample.

We now prove the core theoretical result—the AUC superiority of GCBMs—via the following theorem:
\begin{theorem}[GCBM AUC Superiority in Class-Imbalanced Graph Classification]
\label{thm:gcbm_auc}
For class-imbalanced binary graph classification datasets, the AUC score of GCBMs is strictly larger than that of black-box models, i.e., $\text{AUC}_c > \text{AUC}_b$.
\end{theorem}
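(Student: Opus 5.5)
We will prove the theorem by reducing AUC to a function of the mean difference under the Gaussian score model of \cref{ass:score_dist}, and then invoking \cref{prop:gcbm_mean}. The plan has four steps: express each model's AUC as a Gaussian CDF, compare the resulting arguments, identify where the real difficulty lies, and state which assumptions the argument leans on.

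\textbf{Step 1: AUC as a Gaussian CDF.} For a given model, let $s_1$ be the prediction score of a random minority sample and $s_0$ that of a random majority sample. By \cref{ass:score_dist}, $s_1 \sim \mathcal{N}(\mu_1,\sigma^2)$ and $s_0 \sim \mathcal{N}(\mu_0,\sigma^2)$. By \cref{ass:pred_indep}, $s_1$ and $s_0$ are independent. Hence
\[
D = s_1 - s_0 \sim \mathcal{N}(\delta, 2\sigma^2).
\]
Substituting into the equivalent AUC definition gives
\[
\text{AUC} = \mathbb{P}(D>0) = \Phi_{\mathcal{N}}\!\left(\frac{\delta}{\sqrt{2}\,\sigma}\right),
\]
where $\Phi_{\mathcal{N}}$ is the standard normal CDF. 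We use this name to avoid a clash with the backbone $\Phi$. Ties occur with probability zero since the scores are continuous, so the indicator and the probability agree. This yields
\[
\text{AUC}_c = \Phi_{\mathcal{N}}\!\left(\frac{\delta_c}{\sqrt{2}\sigma}\right), \qquad \text{AUC}_b = \Phi_{\mathcal{N}}\!\left(\frac{\delta_b}{\sqrt{2}\sigma}\right),
\]
with a common $\sigma$ for both models, as stated in \cref{ass:score_dist}.

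\textbf{Step 2: Monotonicity closes the argument.} The function $\Phi_{\mathcal{N}}$ is strictly increasing, and $\sigma>0$. \cref{prop:gcbm_mean} gives $\delta_c > \delta_b$. It follows immediately that $\text{AUC}_c > \text{AUC}_b$.

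\textbf{Alternative route through the KL divergence.} One can also use $\text{AUC} = \Phi_{\mathcal{N}}(\sqrt{\text{KL}})$, which follows from \cref{lem:kl_normal}, since
\[
\frac{\delta}{\sqrt2\sigma} = \sqrt{\frac{\delta^2}{2\sigma^2}} = \sqrt{\text{KL}}.
\]
Then $\text{KL}_c \ge \tau' > \text{KL}_b$, from \cref{prop:gcbm_disc} and \cref{prop:bb_disc_degrad}, gives the result directly. This route bypasses the sign issue of $\delta$.

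\textbf{Main obstacle.} The delicate point is not the computation but the shared-variance premise. If $\sigma_c \neq \sigma_b$, comparing $\delta$ alone is insufficient, and one must compare the signal-to-noise ratio $\delta/\sigma$. The KL route handles this automatically, because KL is itself $\delta^2/(2\sigma^2)$ for each model's own $\sigma$. So we will present the argument via KL, using each model's own variance, to make the conclusion robust. The positivity $\delta_b>0$ is needed so that the square root matches $\delta/\sigma$ rather than $|\delta|/\sigma$. If $\delta_b \le 0$, then $\text{AUC}_b \le 1/2 < \text{AUC}_c$ trivially, which we will note as a separate case.
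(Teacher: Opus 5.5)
Your Steps 1--2 are the paper's proof: independence makes the score difference $\mathcal{N}(\delta,2\sigma^2)$, so $\text{AUC}=\Phi(\delta/(\sqrt{2}\sigma))$, and strict monotonicity of the normal CDF together with $\delta_c>\delta_b$ from Proposition~\ref{prop:gcbm_mean} gives $\text{AUC}_c>\text{AUC}_b$. Your KL variant $\text{AUC}=\Phi(\sqrt{\text{KL}})$ folds that proposition into this step; the paper derives it from $\text{KL}_c>\text{KL}_b$ using a $\sigma$ it tacitly shares between the two models, so your version correctly drops the requirement that GCBM and the black-box model have the same score variance.
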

\begin{proof}
By \cref{ass:pred_indep}, the prediction scores of minority and majority samples are independent. Define the difference random variable for GCBM: $Z_c = \hat{\mathbf{y}}_c(\mathbf{h}_1) - \hat{\mathbf{y}}_c(\mathbf{h}_0)$. By \cref{ass:score_dist} (normal distribution) and independent normal distribution properties:
\begin{equation}
Z_c \sim \mathcal{N}(\delta_c, 2\sigma^2).
\end{equation}
The AUC of GCBM is the probability that $Z_c > 0$:
\begin{equation}
\text{AUC}_c = P(Z_c > 0).
\end{equation}
Standardize $Z_c$ to the standard normal variable $Z_c^* = \frac{Z_c - \delta_c}{\sqrt{2}\sigma} \sim \mathcal{N}(0,1)$:
\begin{equation}
\text{AUC}_c = P\left(Z_c^* > -\frac{\delta_c}{\sqrt{2}\sigma}\right).
\end{equation}
By the symmetry of the standard normal distribution ($P(Z^* > -a) = \Phi(a)$, where $\Phi(\cdot)$ is the cumulative distribution function), we get:
\begin{equation}
\text{AUC}_c = \Phi\left(\frac{\delta_c}{\sqrt{2}\sigma}\right).
\end{equation}
Similarly, the AUC of black-box models is:
\begin{equation}
\text{AUC}_b = \Phi\left(\frac{\delta_b}{\sqrt{2}\sigma}\right).
\end{equation}
Since $\Phi(\cdot)$ is a strictly increasing function and $\delta_c > \delta_b$ (\cref{prop:gcbm_mean}), we conclude $\text{AUC}_c > \text{AUC}_b$.
\end{proof}

\begin{remark}
\label{rem:gcbm_auc_intuition}
Black-box models tend to be dominated by redundant features of the majority class (frequently appearing but minority-irrelevant features), weakening their minority class classification capability. In contrast, GCBMs are forced to learn cross-class discriminative features through concept alignment (\cref{lem:concept_align}), which are mapped to the concept space and associated with all classes, significantly enhancing minority class identification. This conclusion can be extended to multi-class class-imbalanced settings with consistent validity.
\end{remark}

\section{Quantitative Assessment of Interpretability}
\label{Appendix_M_I}
GCBMs extract key subgraphs without additional training, which is simply achieved by adjusting the number of selected concepts $M_I$. Table~\ref{tab:M_I} reports the AUC scores of GCBMs under different $M_I$ values.
\begin{table}[!htb]
  \caption{AUC scores (mean\%) of GCBM/GCBM-E with different $M_I$ values.}
  \label{tab:M_I}
  \begin{center}
    \begin{small}
      % \begin{sc}
        \setlength{\tabcolsep}{4pt}
        \begin{tabular}{lcc}
          \toprule
          Dataset   & Solubility            & Benzene   \\
          \midrule
          $M_I$=2       & \textbf{91.2 }        & 83.6      \\
          $M_I$=5       & \underline{81.2}      & \underline{85.9}      \\
          $M_I$=10      & 72.8                  & \textbf{86.1 }     \\
          $M_I$=15      & 68.9                  & 84.0      \\
          \bottomrule
        \end{tabular}
      % \end{sc}
    \end{small}
  \end{center}
  % \vskip -0.1in
\end{table}

\section{Detailed Procedures for Intervention Experiments}
\label{Appendix_intervention}

\subsection{Definition}

For graph $G_i$, $\hat{\mathbf{y}}_i$ and $\mathbf{y}_i$ denote the predicted label and the ground-truth label, respectively. $\hat{\mathbf{c}}_i$ and $\mathbf{c}_i$ represent the predicted concept label and the graph-truth concept label, respectively.
$W_F$ is the weight matrix of the final classification layer, where $W_{F[i, j]}$ denotes the weight of the $j$-th concept corresponding to the $i$-th class in $W_F$, and $\text{sim}(\cdot, \cdot)$ denotes the cosine similarity between two vectors.

\subsection{Sample Selection Criteria}

Intervention experiments are conducted exclusively on misclassified graphs satisfying two criteria: 
\begin{itemize}
\item High consistency between the predicted concept label and the ground-truth concept label: $\text{sim}(\hat{\mathbf{c}}_i, \mathbf{c}_i) \geq \tau_c$, where $\tau_c$ is a similarity threshold ensuring valid concept activation;

\item The graph belongs to the target error type: $\mathbf{y}_i=\text{cls 1}$ but $\hat{\mathbf{y}}_i=\text{cls 0}$.
\end{itemize}
The intervention graph set is defined as $S = \{G_i \mid \hat{\mathbf{y}}_i=\text{cls 0}, \mathbf{y}_i=\text{cls 1}, \text{sim}(\hat{\mathbf{c}}_i, \mathbf{c}_i) \geq \tau_c\}$.

\subsection{Core Parameter Calculation for Intervention}

For the intervention graph set $S$, we first compute two key statistics reflecting the overall error properties: the average logit difference $\overline{\Delta a}$ and the average activation value $\overline{f^{(\text{cpt})}}$ of the target concept.

\subsubsection{Average Logit Difference $\overline{\Delta a}$}

For graph $G_i \in S$, the pre-intervention logit difference is defined as:
\begin{equation}
\Delta a_{G_i} = \text{logit}_{G_i}(\text{cls 0}) - \text{logit}_{G_i}(\text{cls 1}),
\end{equation}
where $\text{logit}_{G_i}(\text{cls}) = W_{F[\text{cls}, :]} \cdot \hat{\mathbf{c}}_{i}^T + b_F$ and $b_F$ is the bias term of the final classification layer. The average logit difference of $S$ is:
\begin{equation}
\overline{\Delta a} = \frac{1}{|S|} \sum_{G_i \in S} \Delta a_{G_i}.
\end{equation}
$\overline{\Delta a} > 0$ indicates that the model's logit value for the incorrect class ($\text{cls 0}$) is higher than that for the correct class ($\text{cls 1}$) on $S$, and the intervention aims to reverse this gap.

\subsubsection{Average Activation Value of Target Concept $\overline{f^{(\mathrm{cpt})}}$}

For the target concept (cpt $j$), the activation value of $G_i$ is $\hat{\mathbf{c}}_i[j]$, corresponding to the $j$-th entry of $\hat{\mathbf{c}}_i$. 
The average activation value of the target concept on $S$ is defined as:
\begin{equation}
\overline{f^{(\text{cpt j})}} = \frac{1}{|S|} \sum_{G_i \in S} \hat{\mathbf{c}}_i[j].
\end{equation}
This parameter quantifies the average relevance of the target concept to $S$.

\subsection{Weight Adjustment Rules}

The objective of the intervention is to adjust the target concept’s weights for the two classes, enhancing its contribution to the correct class while suppressing that to the incorrect class.

\subsubsection{Weight Adjustment Value $\Delta w$}

The adjustment value is calculated as:
\begin{equation}
\Delta w = \frac{\overline{\Delta a} + b}{2 \cdot \overline{f^{(\text{cpt})}}},
\end{equation}
where $b$ is the a margin aiming to ensure the correct class’s average logit stably exceeds the incorrect class’s.

\subsubsection{Weight Modification Operations}

For each target concept (cpt $j$), weight adjustments follow:
\begin{itemize}
\item Increase the weight for the correct class (cls 1): $W_{F[\text{cls 1}, \text{cpt j}]} \leftarrow W_{F[\text{cls 1}, \text{cpt j}]} + \Delta w$;

\item Decrease the weight for the incorrect class (cls 0): $W_{F[\text{cls 1}, \text{cpt j}]} \leftarrow W_{F[\text{cls 1}, \text{cpt j}]} - \Delta w$;

\item Retain all other weights unchanged to avoid interfering with the classification logic of non-target concepts.
\end{itemize}

\subsection{Intervention Implementation Details}

We perform two rounds of interventions on GCBM-E, focusing on key concepts 20 and 42 (verified as key substructures via human evaluation in interpretability experiments).
Detailed parameters are summarized in Table~\ref{tab:Intervention_parameters}.
%========================================================================================
\begin{table}[!htb]
  \caption{Detailed parameters of the intervention experiments.}
  \label{tab:Intervention_parameters}
  \begin{center}
    \begin{small}
  %     \begin{sc}
        \setlength{\tabcolsep}{2.5pt} 
        \begin{tabular}{lccccc cc cc} 
          \toprule
          Concept  & $\overline{\Delta a}$ & $\overline{f^{(j)}}$ & $\Delta w$ & $W_{F[\text{cls 0}, \text{cpt j}]}$ & $W_{F[\text{cls 1}, \text{cpt j}]}$ \\
          \midrule
          cpt 20  & 0.51 & 0.8426  & 0.4213 & 0.3842 $\to$ -0.0371 & -0.0112 $\to$ 0.4101  \\
          cpt 42  & 0.32 & 0.7635  & 0.3405 & 0.1876 $\to$ -0.2466 & 0.0939 $\to$ 0.5281  \\
          \bottomrule
        \end{tabular}
  %     \end{sc}
    \end{small}
  \end{center}
  % \vskip-0.1in
\end{table}

Note that the concept similarity threshold $\tau_c$ and the intervention margin $b$ are tunable hyperparameters. Adjusting them can further optimize the intervention effects. The default settings ($\tau_c=0.6$, $b=0.2$) determined via 5-fold cross-validation to balance concept activation validity and intervention stability. After each round of weight modification, the model is re-evaluated on the entire test set to ensure the number of new misclassifications is fewer than that of corrected samples.

\end{document}